\def\eqref#1{equation~\ref{#1}}
\def\1{\bm{1}}
\def\eps{{\epsilon}}
\def\rvv{{\mathbf{v}}}
\def\rvx{{\mathbf{x}}}
\def\rvy{{\mathbf{y}}}
\def\rvz{{\mathbf{z}}}
\def\vf{{\bm{f}}}
\def\vh{{\bm{h}}}
\def\vs{{\bm{s}}}
\def\vw{{\bm{w}}}
\DeclareMathAlphabet{\mathsfit}{\encodingdefault}{\sfdefault}{m}{sl}
\SetMathAlphabet{\mathsfit}{bold}{\encodingdefault}{\sfdefault}{bx}{n}
\newcommand{\E}{\mathbb{E}}
\newcommand{\R}{\mathbb{R}}
\newcommand{\ie}{\emph{i.e.}}
\newcommand{\eg}{\emph{e.g.}}
\newtheorem{prop}{Proposition}
\newcommand{\norm}[1]{\left\lVert#1\right\rVert}
\newcommand*\circled[1]{\tikz[baseline=(char.base)]{
  \node[shape=circle,draw,inner sep=1pt] (char) {#1};}}
\theoremstyle{plain}
\newtheorem{theorem}{Theorem}[section]
\theoremstyle{definition}
\theoremstyle{remark}
\newtheorem{remark}[theorem]{Remark}
\begin{document}

\twocolumn[
\icmltitle{FrameBridge: Improving Image-to-Video Generation with Bridge Models}



\icmlsetsymbol{equal}{*}

\begin{icmlauthorlist}
\icmlauthor{Yuji Wang}{equal,Tsinghua}
\icmlauthor{Zehua Chen}{equal,Tsinghua,shengshu}
\icmlauthor{Xiaoyu Chen}{Tsinghua}
\icmlauthor{Yixiang Wei}{Tsinghua}
\icmlauthor{Jun Zhu}{Tsinghua,shengshu}
\icmlauthor{Jianfei Chen}{Tsinghua}
\end{icmlauthorlist}

\icmlaffiliation{Tsinghua}{Dept. of Comp. Sci. and Tech., Institute for AI, BNRist Center, THBI Lab, Tsinghua-Bosch Joint ML Center, Tsinghua University}
\icmlaffiliation{shengshu}{ShengShu, Beijing, China}

\icmlcorrespondingauthor{Jianfei Chen}{jianfeic@tsinghua.edu.cn}

\icmlkeywords{Machine Learning, ICML}

\vskip 0.3in
]



\printAffiliationsAndNotice{\icmlEqualContribution} 

\begin{abstract}
Diffusion models have achieved remarkable progress on image-to-video (I2V) generation, while their noise-to-data generation process is inherently mismatched with this task, which may lead to suboptimal synthesis quality. 
In this work, we present FrameBridge. 
By modeling the \textit{frame-to-frames} generation process with a bridge model based \textit{data-to-data} generative process, we are able to fully exploit the information contained in the given image and improve the consistency between the generation process and I2V task.
Moreover, we propose two novel techniques toward the two popular settings of training I2V models, respectively.
Firstly, we propose \textit{SNR-Aligned Fine-tuning (SAF)}, making the first attempt to fine-tune a diffusion model to a bridge model and, therefore, allowing us to utilize the pre-trained diffusion-based text-to-video (T2V) models.
Secondly, we propose \textit{neural prior}, further improving the synthesis quality of FrameBridge when training from scratch.
Experiments conducted on WebVid-2M and UCF-101 demonstrate the superior quality of FrameBridge in comparison with the diffusion counterpart (zero-shot FVD 95 vs. 192 on MSR-VTT and non-zero-shot FVD 122 vs. 171 on UCF-101), and the advantages of our proposed SAF and neural prior for bridge-based I2V models. The project page: \url{https://framebridge-icml.github.io/}.

\end{abstract}

\section{Introduction}
Image-to-video (I2V) generation, commonly referred as image animation, aims at generating consecutive video frames from a static image~\citep{xing2023dynamicrafter, ni2023conditional, zhang2024pia, guo2023animatediff, hu2022make}, \ie, a \textit{frame-to-frames} generation task where maintaining appearance consistency and ensuring temporal coherence of generated video frames are the key evaluation criteria~\citep{xing2023dynamicrafter,  zhang2024pia}.
With the recent progress in video synthesis~\citep{videoworldsimulators2024, yang2024cogvideox, blattmann2023stable, bao2024vidu}, several diffusion-based I2V frameworks have been proposed, with novel designs on network architecture~\citep{xing2023dynamicrafter, zhang2024pia, chen2023seine, ren2024consisti2v, lu2023vdt}, cascaded framework~\citep{jain2024video, zhang2023i2vgen}, and motion representation~\citep{zhang2024extdm, ni2023conditional}.
However, although these methods have demonstrated the strong capability of diffusion models~\citep{ho2020denoising, song2020score} for I2V synthesis, their \textit{noise-to-data} generation process is inherently mismatched with the \textit{frame-to-frames} synthesis of I2V task, making them suffer from the difficulty of generating high-quality video samples from uninformative Gaussian noise rather than the given image.

\begin{figure*}[h]
    \centering
    \includesvg[inkscapelatex=false,width=0.8\textwidth]
    {Figures/Figure1_process_submit}
\caption{\textbf{Overview of FrameBridge and diffusion-based I2V models}. The sampling process of FrameBridge (upper) starts from given static image, while diffusion models (lower) synthesize videos from uninformative Gaussian noise.}
\label{figure:overview}
\end{figure*}

In this work, inspired by recently proposed bridge models~\citep{chenlikelihood, liu20232, chen2023schrodinger}, we present FrameBridge, a novel I2V framework to model the \textit{frame-to-frames} synthesis process with a \textit{data-to-data} generative framework instead of the \textit{noise-to-data} one in diffusion models.
Specifically, given the input image and video target, we first leverage variational auto-encoder (VAE) based compression network to transform them into continuous latent representations, and then take their latent representations as boundary distributions, \ie, prior and target, to establish our \textit{data-to-data} generative framework. 
Considering the static image has already been an informative prior for each of the consecutive frames in video target, we naturally replicate it to obtain the prior of the whole video clip, constructing the \textit{frames-to-frames} training data pairs for the prior-to-target generative framework in FrameBridge.
Standing on constructed pairs, we establish bridge models~\citep{tong2023simulation, zhou2023denoising, chen2023schrodinger} between them to learn the I2V synthesis with Stochastic Differential Equation (SDE) based generation process.
In comparison with previous diffusion-based I2V methods, our FrameBridge utilizes given static image as the prior of video target, which is advantageous on preserving the appearance details of input image than conditionally generating video samples from random noise.
Moreover, our frames-to-frames bridge model learns image animation in model training rather than learning image-conditioned noise-to-video generation, which enhances the consistency between generative framework and I2V task, \ie, \textit{data-to-data} for \textit{frame-to-frames} and tends to benefit temporal coherence for I2V synthesis.

In practice, I2V systems usually take advantage of a pre-trained diffusion-based text-to-video (T2V) model~\citep{xing2023dynamicrafter, chen2023seine, ma2024cinemo} with a fine-tuning process, to reduce the requirements of image-video data pairs and the computational resources at the training stage of I2V generation. 
Toward efficiently utilizing previously pre-trained \textit{diffusion-based T2V models}, we propose SNR-Aligned Fine-tuning (SAF), a novel technique for fine-tuning them to \textit{bridge-based I2V models}.
Specifically, we first reparameterize the bridge process in FrameBridge, enabling the noisy intermediate representations of our frames-to-frames process to be aligned with the ones in the noise-to-frames process of pre-trained diffusion models, improving fine-tuning efficiency. 
Then, we change the timestep to match the signal-to-noise (SNR) ratio between the input of the bridge model and the pre-trained diffusion model, remaining the differences between the diffusion and bridge process.
Our SAF aligns the noisy intermediate representations of two generative frameworks while preserving the difference between them (\textit{i.e.}, diffusion and bridge process), and therefore improves the final synthesis quality of FrameBridge when adapting pre-trained T2V diffusion models.

Compared to diffusion models using Gaussian prior, FrameBridge takes the given static image as the prior of video target to improve I2V performance. 
Toward further improving bridge-based I2V synthesis quality, we present a stronger prior for FrameBridge.  Given a static image, we design a one-step mapping-based network and optimize it with the video target, extracting a \textit{neural prior} from the image for the video target. Compared to input image, this neural prior reduces the distance between prior and video target to a greater extent, and alleviates the burden of generation process further.
Although more advanced methods can be leveraged to extract more informative neural prior, we empirically find that a coarse estimation for video target at the cost of a single sampling step has already been beneficial to FrameBridge. 
This further verifies our motivation to present FrameBridge and shows a novel method to enhance bridge-based I2V models.
In this work, our contributions can be summarized as follows:

\begin{figure*}[h]
    \centering
    \includesvg[inkscapelatex=false,width=0.775\textwidth]
    {Figures/Figure1_marginal}
\caption{\textbf{Visualization for the mean value of marginal distributions}. We visualize the decoded mean value of bridge process and diffusion process. The prior and target of FrameBridge are naturally suitable for I2V synthesis.}
\label{figure:marginal}
\end{figure*}

\begin{itemize}
    \item We propose FrameBridge, making the first attempt to model the frame-to-frames generation task of I2V with a data-to-data generative framework.
    \item We present two novel techniques, SAF and neural prior, further improving the performance of FrameBridge when fine-tuning from pre-trained T2V diffusion models and training from scratch respectively.
    \item We conduct experiments on two I2V benchmarks by training FrameBridge on WebVid-2M \citep{bain2021frozen} and UCF-101 \citep{soomro2012ucf101}. Compared with its diffusion counterpart, FrameBridge fine-tuned with SAF reduces the zero-shot FVD (\citet{unterthiner2018towards}; lower is better) from 192 to 95 on MSR-VTT \citep{xu2016msr}, and FrameBridge with neural prior trained from scratch reduces the non-zero-shot FVD from 171 to 122 on UCF-101, highlighting the superiority of FrameBridge to their diffusion counterparts and the effectiveness of SAF and neural prior.
\end{itemize}

\section{Related Works}
\label{section:related works}

\paragraph{Diffusion-based I2V Generation} 
Diffusion models have recently achieved remarkable progress in I2V synthesis~\citep{blattmann2023stable, chen2023videocrafter1, li2024generative} and proposed multi-stage generation system~\citep{jain2024video, zhang2023i2vgen, shi2024motion, zhang2025videoelevator}, fusion module~\citep{wang2024videocomposer, ren2024consisti2v} and improved network architectures \citep{wang2024videocomposer, xing2023dynamicrafter, ma2024cinemo, chen2023seine, ren2024consisti2v, zhang2024sageattention2, zhang2025sageattention3, zhang2025sageattention, zhang2025spargeattn}.
However, their \textit{noise-to-data} generation process may be inefficient for I2V synthesis. 
To improve the uninformative prior of diffusion models \citep{fischer2023boosting, albergo2024stochastic, yang2024noise}, PYoCo \citep{ge2023preserve} proposes to use correlated noise for each frame in both training and inference. ConsistI2V \citep{ren2024consisti2v}, FreeInit \citep{wu2023freeinit}, and CIL \citep{zhao2024identifying} present training-free strategies to better align the training and sampling distribution of diffusion prior. 
These strategies improve the noise distribution to enhance the quality of synthesized videos, while they still suffer the restriction of noise-to-data diffusion framework, which may limit their endeavor to utilize the entire information (\eg, both large-scale features and fine-grained details) contained in the given image.
In this work, we propose a \textit{data-to-data} framework and utilize clean and deterministic prior rather than Gaussian noise, allowing us to leverage the given image as prior information.

\paragraph{Bridge Models} Recently, bridge models~\citep{chenlikelihood, tong2023simulation, liu20232, zhou2023denoising, chen2023schrodinger, zheng2024diffusion, he2024consistency, de2021diffusion, peluchetti2023non}, which overcome the restriction of Gaussian prior in diffusion models, have gained increasing attention. They have demonstrated the advantages of \textit{data-to-data} generation process over the \textit{noise-to-data} one on image-to-image translation~\citep{liu20232,zhou2023denoising} and speech synthesis~\citep{chen2023schrodinger,bridge-sr} tasks. In this work, we make the first attempt to extend bridge models to I2V synthesis and further propose two improving techniques for bridge models, enabling efficient fine-tuning from diffusion models and stronger prior for video target.

\section{Motivation}
\label{section:preliminaries}

\paragraph{Diffusion-based I2V Synthesis}

I2V synthesis aims at generating a video clip $\rvv \in \R^{L \times H \times W \times 3}$ with $L$ frames conditioning on a static image, \eg, the initial frame $v^{i} \in \R^{H \times W \times 3}$ of video clip $\rvv$. 
In diffusion-based I2V systems~\citep{xing2023dynamicrafter, blattmann2023stable}, an VAE-based compression network is usually leveraged to first transform the video $\rvv$ into a latent $\rvz \in \R^{L \times h \times w \times d}$ in a per-frame manner with a pre-trained image encoder $\mathcal{E}(\rvv)$, where $h = \frac{H}{p}$, $w = \frac{W}{p}$, $p > 1$ and $d$ are the spatial compression ratio and the number of output channels. 
A forward diffusion process gradually converts the video latent $p_0(\rvz_0 | z^{i}, c) \triangleq p_{data}(\rvz_0 | z^{i}, c)$ to a known prior distribution $p_{T, diff}(\rvz_T) \triangleq p_{prior, diff}(\rvz_T)$ with a forward SDE~\citep{song2020score}: 
\begin{equation}
\label{eq:diffusion forward sde}
    \mathrm{d} \rvz_t = \vf(t) \rvz_t \mathrm{d}t + g(t) \mathrm{d} \mathbf{w} , \quad  \rvz_0 \sim p_{data}(\rvz_0 | z^{i}, c),
\end{equation}
where $\mathbf{w}$ is a Wiener process, $\vf$ and $g$ are known coefficients, $z^{i} \in \R^{h \times w \times d}$ is the compressed latent of the initial frame $v^{i}$, and $c$ denotes other guidance such as the text prompt~\citep{ma2024cinemo, chen2023seine} or the class condition~\citep{ni2023conditional, zhang2024extdm}. In sampling, we first synthesize the latent $\rvz \sim p_0(\rvz_0 | z^i, c)$ with the backward SDE which shares the same marginal distribution $p_{t, diff}(\rvz_t | z^{i}, c)$~\citep{song2020score}:

\begin{equation}
\label{eq:diffusion backward sde}
\begin{aligned}
    \mathrm{d} \rvz_t = &\left[\vf(t)\rvz_t - g(t)^2 \nabla_{\rvz_t} \log p_{t, diff}(\rvz_t | z^{i}, c)\right]\mathrm{d}t \\
    &+ g(t) \mathrm{d} \bar{\mathbf{w}}, \quad  \rvz_T \sim p_{prior, diff}(\rvz_T),
\end{aligned}
\end{equation}

from a Gaussian prior $p_{prior, diff}(\rvz_T) \sim \mathcal{N}(0, I)$, and then decode the video clip with pre-trained VAE decoder $\mathcal{D}(\rvz)$.
To estimate the score function $\nabla_{\rvz_t} \log p_{t, diff}(\rvz_t | z^{i}, c)$, a U-Net~\citep{ronneberger2015u, ho2020denoising} or DiT~\citep{peebles2023scalable, bao2023all} based neural network is optimized with a denoising objective: 

\begin{equation}
    \mathcal{L}(\theta) = \E_{(\rvz_0, z^{i}, c), t, \rvz_t} \left[\lambda(t) \norm{\bm\epsilon_\theta(\rvz_t, t, z^{i}, c)- \frac{\rvz_t - \alpha_t \rvz_0}{\sigma_t}}^2\right],
\end{equation}

Here $\lambda(t)$ is a time-dependent weight function, and $\bm\epsilon_\theta(\rvz_t, t)$ is an alternative parameterization method of the score function~\citep{ho2020denoising}.

\paragraph{Limitations}
As shown, the forward process of diffusion models gradually injects noise into data samples, which results in a boundary distribution at $t=T$ sharing the same distribution with the injected noise, \eg, the standard Gaussian noise $\bm\epsilon\sim\mathcal{N}(\bm{0}, \bm{I})$.  
Therefore, in generation, their sampling process has to start from the uninformative prior distribution $p_{prior, diff}(\rvz_T)\sim\mathcal{N}(\bm{0}, \bm{I})$ and then iteratively synthesize the video latent $\rvz_0$ with learned conditional score function $\nabla_{\rvz_t} \log p_t(\rvz_t | z^{i}, c)$.

However, for I2V generation, the two key requirements are preserving the appearance details of the given static image~\citep{ren2024consisti2v, ma2024cinemo} and ensuring temporal coherence between generated video frames~\citep{guo2023animatediff, zhang2024trip}. The noise prior of diffusion models and the mismatch between \textit{noise-to-data} generation and \textit{frame-to-frames} synthesis inevitably increase the burden of the generation process when meeting these two requirements.
In this work, we propose FrameBridge. By modeling I2V with a \textit{data-to-data} process, we simultaneously improve the prior of generation process for preserving the appearance details and enhance the consistency between the generative framework and I2V task for ensuring temporal coherence, leading to improved I2V performance. 

\begin{figure*}[h]
\centering
\begin{minipage}[t]{0.45\linewidth}
    \centering
    \includesvg[inkscapelatex=false,width=\textwidth]{Figures/Figure2_SAF}
    \centerline{(a) SAF technique}
\end{minipage}%
\begin{minipage}[t]{0.45\linewidth}
    \centering
    \includesvg[inkscapelatex=false,width=\textwidth]{Figures/figure3_new}
    \centerline{(b) Effectiveness of SAF}
\end{minipage}
\caption{\textbf{SNR-Aligned Fine-tuning for FrameBridge.} (a) SAF technique aligns the noisy latents of bridge process and diffusion process with respective timesteps, enabling efficient fine-tuning from diffusion-based T2V model to bridge-based I2V models. (b) FrameBridge with SAF can better leverage the capability of pre-trained models.}
\label{figure:SAF}
\end{figure*}

\section{FrameBridge}
\label{section:methods}

\subsection{Bridge-based I2V Synthesis}
\label{section:methods framebridge}

Considering the given image, \ie, initial frame $z^i$, has provided the appearance details and the starting point of animation for video target, we take it as the prior of following frames. To construct the boundary distributions for bridge models, we replicate the image latent $z^{i}$ for $L$ times along temporal axis to obtain $\rvz^{i} \in \R^{L \times h \times w \times d}$ as the prior of video latent $\rvz \in \R^{L \times h \times w \times d}$, and establish the bridge process as follows.

\paragraph{Bridge Process} In Figure~\ref{figure:overview}, we present the overview of FrameBridge and compare it with diffusion-based I2V generation. Different from diffusion-based I2V models using uninformative Gaussian prior, our FrameBridge replaces the Gaussian prior with a Dirac prior $\delta_{\rvz^{i}}$, building a bridge process \citep{zhou2023denoising} to connect the video target and the replicated image prior $p_{prior, bridge}(\rvz_T | z^i, c) \triangleq \delta_{\rvz^{i}}(\rvz_T)$. Specifically, the forward process is changed from~\cref{eq:diffusion forward sde} in diffusion models to:

\begin{equation}
\label{eq:bridge forward sde}
\begin{aligned}
    \mathrm{d} \rvz_t = &\left[\vf(t) \rvz_t + g(t)^2 \vh(\rvz_t, t, \rvz_T, z^i, c)\right]\mathrm{d}t \\
    &+ g(t) \mathrm{d} \vw, \quad \rvz_0 \sim p_{data}(\rvz_0 | z^i, c), \quad \rvz_T = \rvz^{i},
\end{aligned}
\end{equation}
where $\vh(\rvz_t, t, \rvz_T, z^i, c) \triangleq \nabla_{\rvz_t} \log p_{T, diff}(\rvz_T | \rvz_t)$ and $p_{T, diff}(\rvz_T | \rvz_t)$ is the marginal distribution of diffusion process shown in~\cref{eq:diffusion forward sde}. For bridge process, we denote the marginal distribution of \cref{eq:bridge forward sde} as $p_{t, bridge}(\rvz_t | z^i, c)$. Similar to the forward SDE~\cref{eq:diffusion forward sde} in diffusion process, the forward process of bridge models \cref{eq:bridge forward sde} also has a reverse process, which shares the same marginal distribution $p_{t, bridge}(\rvz_t | z^i, c)$ and can be represented by the backward SDE:
\begin{equation}
\label{eq:bridge backward sde}
\begin{aligned}
    \mathrm{d} \rvz_t &= [ \vf(t) \rvz_t - g(t)^2 (\vs(\rvz_t, t, \rvz_T, z^i, c) \\
    &- \vh(\rvz_t, t, \rvz_T, z^i, c)) ] \mathrm{d}t + g(t) \mathrm{d} \bar{\vw}, \quad \rvz_T = \rvz^{i},
\end{aligned}
\end{equation}
where $\vs(\rvz_t, t, \rvz_T, z^i, c) \triangleq \nabla_{\rvz_t} \log p_{t, bridge}(\rvz_t | \rvz_T, z^i, c)$.

The change from the diffusion to the bridge process removes the restriction of noisy prior, allowing the generation process to start from a static image rather than previous Gaussian noise. Moreover, as the perturbation kernel $p_{t, bridge}(\rvz_t | \rvz_0, \rvz_T, z^i, c)$ in bridge process remains Gaussian (Appendix \ref{appendix:proof}), it facilitates us to find connections between the marginal distribution, \ie, the intermediate representations of diffusion and bridge process, and then leverage the power of pre-trained diffusion models for bridge models.

\paragraph{Training Objective} 
Analogous to diffusion models, 
we use a SDE solver to solve \cref{eq:bridge backward sde} when sampling videos. Since 
$\vh(\rvz_t, t, \rvz_T, z^i, c)$ can be calculated analytically (see Appendix \ref{appendix:proof}), we only need to estimate the unknown term $\vs(\rvz_t, t, \rvz_T, z^i, c)$ with neural networks \citep{kingma2021variational}. After parameterization as shown in Appendix \ref{appendix:proof}, we train our models $\bm\eps_\theta^{\hat\Psi}(\rvz_t, t, \rvz_T, z^i, c)$ with the denoising objective \citep{chen2023schrodinger}:

\begin{equation}
\label{eq:training objective}
\begin{aligned}
    \mathcal{L}_{bridge}(\theta) = &\E_{\substack{(\rvz_0, z^i, c) \sim p_{data}(\rvz_0, z^i, c), \rvz_T = \rvz^i,\\ t, \rvz_t \sim p_{t, bridge}(\rvz_t | \rvz_0, \rvz_T, z^i, c)}}\\
    &\left[\norm{\bm\eps_\theta^{\hat\Psi}(\rvz_t, t, \rvz_T, z^i, c) - \frac{\rvz_t - \alpha_t \rvz_0}{\sigma_t}}^2\right].
\end{aligned}
\end{equation}

The training of FrameBridge resembles that of Gaussian diffusion-based I2V models: We first sample a video latent $\rvz_0$ and the condition $c$ from training set, extracting the first frame of $\rvz_0$ to construct $\rvz^i$. The primary difference lies in the Gaussian perturbation kernel $p_{t, bridge}(\rvz_t | \rvz_0, \rvz_T, z^i, c)$ of \cref{eq:training objective}.
As we replace the Gaussian prior with a deterministic representation $\rvz_T$, the mean value is an interpolation between data and $\rvz_T$ instead of the decaying data in diffusion models, naturally preserving more data information and facilitating generative models to learn image animation rather than regenerating the information provided in static image.

\paragraph{Bridge Process vs Diffusion Process}
To demonstrate the advantages of bridge process in I2V synthesis, we visualize the data part, \ie, the mean function of bridge and diffusion process, in Figure~\ref{figure:marginal}.
As shown, when replicating the initial frame, I2V synthesis can be formulated as a \textit{frames-to-frames} generation task.
With the \textit{data-to-data} bridge process, the boundary distributions of our FrameBridge have been an ideal fit for the I2V task, which is helpful for generative models to focus on modeling the image animation process. 

In the meanwhile, as seen from our intermediate representations, the data information, \eg, appearance details, is well preserved during the bridge process.
In comparison, the prior and intermediate representations of diffusion process contain rare or coarse information of the target, which is uninformative and requires diffusion models to generate entire video information from scratch.

\subsection{Efficient Fine-tuning}

\label{section:methods SAF}
A common practice of training I2V models is to fine-tune from pre-trained T2V diffusion models \citep{chen2023seine, chen2023videocrafter1, xing2023dynamicrafter, blattmann2023stable, ma2024cinemo}. The essential difference between the diffusion and bridge process lies in the distribution of noisy latents $\rvz_{t, diff} \sim p_{t, diff}(\rvz_t)$ and $\rvz_{t, bridge} \sim p_{t, bridge}(\rvz_t)$. For certain $t \in [0, T]$, the pre-trained diffusion models only have the capability to denoise $\rvz_{t, diff}$ while our fine-tuning target is to denoise $\rvz_{t, bridge}$, and the substantial discrepancy between noisy latents makes it difficult to utilizing knowledge of pre-trained models. To address this issue, we believe that aligning the latents will allow us to fully leverage the denoising capability and learned representations of pre-trained models, which is critical to a more efficient and effective fine-tuning process \citep{yu2024representation}. Thus, we propose the innovative SNR-Aligned Fine-tuning (SAF) technique to align the latent $\rvz_{t, bridge}$ with a diffusion noisy latent $\rvz_{\tilde{t}, diff}$. 
\begin{figure}[h]
    \centering
    \includesvg[inkscapelatex=false,width=0.5\textwidth]{Figures/Figure4_NP_wrap}
    \caption{\textbf{Case of neural prior.} Our neural prior provides more motion information than the given static image, and is intuitively closer to the video target, further improving the prior of generation process.}
\label{figure:NP}
\end{figure} 
Note that we use a different timestep $\tilde{t} \neq t$, and we only change the noisy latent as input-reparameterization of bridge models. The forward and backward process is still a bridge process and is different from diffusion I2V models.

\paragraph{Reparameterization of Bridge Process.} In bridge process, the perturbed latent $\rvz_t$ at timestep $t$ can be written as the linear combination of $\rvz_0$, $\rvz_T$ and a Gaussian noise $\bm\eps$: $\rvz_t = a_t \rvz_0 + b_t \rvz_T + c_t \bm\eps$ (detailed expression of $a_t, b_t, c_t$ can be found in \cref{eq:reparameterization of bridge}), which takes a different form from $\alpha_t \rvz_0 + \sigma_t \bm\eps$ in diffusion models. Therefore, the pre-trained diffusion models have limited ability to directly denoise such a $\rvz_t$, which impairs effective fine-tuning. To match the distributions of $\rvz_t$, we reparameterize the bridge process by
\begin{equation}
    \tilde{\rvz}_t = \frac{\rvz_t - b_t \rvz_T}{\sqrt{a_t^2 + c_t^2}} = \frac{a_t}{\sqrt{a_t^2 + c_t^2}} \rvz_0 + \frac{c_t}{\sqrt{a_t^2 + c_t^2}} \bm\eps.
\end{equation}

Then, $\tilde{\rvz}_t$ can be represented as the combination of clean data $\rvz_0$ and a Gaussian noise, with the squre sum of coefficients equal to 1. Thus, the reparameterized bridge process $\tilde{\rvz}_t$ exactly aligns with a VP diffusion process.

\begin{figure}[h]
    \centering
    \includesvg[inkscapelatex=false,width=0.50\textwidth]{Figures/image_half.svg}
\vspace{-0.2cm}
\caption{\textbf{Qualitative comparisons between FrameBridge and other baselines.} FrameBridge outperforms diffusion baseline methods in appearance consistency and video quality.}
\label{figure:compare}
\end{figure}

\paragraph{SNR-based Latent Alignment}
Although the marginal distribution of $\tilde{\rvz}_t$ resembles that of a diffusion process, there is still a mismatch between the input of bridge models and pre-trained diffusion models (\ie, $(\tilde{\rvz}_t, t)$ and $(\alpha_t \rvz_0 + \sigma_t \bm\eps, t)$), as it is not guaranteed that $\frac{a_t}{\sqrt{a_t^2 + c_t^2}}=\alpha_t$, $\frac{c_t}{\sqrt{a_t^2 + c_t^2}}=\sigma_t$ (see Figure \ref{figure:SAF}). To handle that, we change the timestep $t$ to another $\tilde{t}$ such that $\alpha_{\tilde{t}} = \frac{a_t}{\sqrt{a_t^2 + c_t^2}}$, $\sigma_{\tilde{t}} = \frac{c_t}{\sqrt{a_t^2 + c_t^2}}$, and then $\tilde{\rvz}_t$ has the same SNR as $\alpha_{\tilde{t}} \rvz_0 + \sigma_{\tilde{t}} \bm\eps$ in diffusion process. According to the above derivation, we reparameterize the input of bridge models as $\bm\eps_{\theta, bridge}^{\hat\Psi}(\rvz_t, t, i, c) \triangleq \bm\eps_{\theta, aligned}^{\hat\Psi}(\tilde{\rvz}_t, \tilde{t}, i, c)$, and initialize $\bm\eps_{\theta, aligned}^{\hat\Psi}$ with the pre-trained T2V diffusion models. SAF enables bridge models to fully exploit the denoising capability of pre-trained diffusion models as the marginal distribution of $\tilde{\rvz}_t$ aligned with $\alpha_{\tilde{t}} \rvz_0 + \sigma_{\tilde{t}} \bm\eps$. We provide more details in Appendix \ref{appendix:proof}.

\subsection{Improved Prior}
\label{section:methods prior}

By establishing a \textit{data-to-data} process for I2V synthesis, we have been able to reduce the distance between the prior and the target from \textit{noise-to-frames} to \textit{frames-to-frames}, and therefore facilitate the generation process and aim at improving the synthesis quality. 
To further demonstrate the function of improving prior information for I2V synthesis, we extend our design of FrameBridge from replicated initial frame $\rvz^i$ to neural representations $F_\eta(z^i, c)$, which serves as a stronger prior for video frames.

As shown in Figure~\ref{figure:NP}, although the static frame has provided indicative information such as the appearance details of the background and different objects, it may not be informative for the motion information in consecutive frames. When the distance between the prior frame and the target frame is large, bridge models are faced with the challenge to generate the motion trajectory.
Therefore, we present a stronger prior than simply duplicating the initial frame, \textit{neural prior}, which achieves a coarse estimation of the target at first, and then bridge models generate the high-quality target from this coarse estimation.

\begin{table*}[t]
\footnotesize
\caption{Zero-shot I2V generation on UCF-101 and MSR-VTT (256 $\times$ 256, 16 frames). w/o SAF means FrameBridge without SAF techniques when fine-tuning. For each metric, we mark the best one with $\dag$ and the second one with $\ddag$. Iterated videos is the number of videos iterated during the training of model (batch size $\times$ iterations). $^{*}:$ results reported in \citet{xing2023dynamicrafter}. $^{**}:$ reproduced with the open-sourced training code \footnotemark.}
\label{table: zero-shot UCF-101 fine-tune}
\begin{center}
\begin{tabular}{llllllll}
\toprule
\multirow{2}{*}{Method} & \multirow{2}{*}{Iterated Videos} & \multicolumn{3}{c}{\textbf{UCF-101}} & \multicolumn{3}{c}{\textbf{MSR-VTT}} \\
\cmidrule(r){3-5} \cmidrule(r){6-8}
& & FVD $\downarrow$ & IS $\uparrow$ & PIC $\uparrow$ & FVD $\downarrow$ & CLIPSIM $\uparrow$ & PIC $\uparrow$ \\
\midrule
SVD \citep{blattmann2023stable} & -- & 236 & -- & -- & 114 & -- & -- \\
SEINE \citep{chen2023seine}                 & -- & 461 & 22.32 & 0.6665 & 245 & \textbf{0.2250}$^\dag$ & 0.6848 \\
ConsistI2V \citep{ren2024consisti2v} & 32.64M & \textbf{202}$^\dag$ & 39.76 & \textbf{0.7638}$^\dag$ & 106 & 0.2249 & \textbf{0.7551}$^\ddag$ \\
SparseCtrl \citep{guo2025sparsectrl} & -- & 722 & 19.45 & 0.4818 & 311 & 0.2245 & 0.4382 \\
I2VGen-XL$^{*}$ \citep{zhang2023i2vgen} & -- & 571 & -- & 0.5313 & 289 & -- & 0.5352 \\
DynamiCrafter$^{**}$ \citep{xing2023dynamicrafter} & 1.28M    & 485 & 29.46 & 0.6266  & 192 & 0.2245 & 0.6131 \\
DynamiCrafter$^{*}$ & 6.4M & 429 & -- & 0.6078 & 234 & -- & 0.5803 \\
\cmidrule{1-8}
FrameBridge-VideoCrafter (w/o SAF) & 1.28M & 433 & 38.61 & 0.5989 & 229 & 0.2246 & 0.5559 \\
FrameBridge-VideoCrafter (w/ SAF) & 1.28M & 312 & \textbf{39.89}$^\ddag$ & 0.6697 & 99 & \textbf{0.2250}$^\dag$ & 0.6963 \\
FrameBridge-VideoCrafter (w/ SAF) & 6.4M & 258 & \textbf{44.13}$^\dag$ & 0.7274 & \textbf{95}$^\dag$ & \textbf{0.2250}$^\dag$ & 0.7142 \\
FrameBridge-CogVideoX (w/ SAF) & 6.4M & \textbf{235}$^\ddag$ & 39.83 & \textbf{0.7563}$\ddag$& \textbf{96}$^\ddag$ & \textbf{0.2250}$^\dag$ & \textbf{0.7566}$^\dag$ \\
\bottomrule
\end{tabular}
\end{center}
\end{table*}

\begin{table*}[t]
\footnotesize
\caption{VBench-I2V \citep{huang2024vbench, huang2024vbench++} scores for different I2V models. For each metric, we mark the best one with $\dag$ and the second one with $\ddag$ (higher score means better performance). The abbreviations represents Camera Motion (CM), I2V-Subject Consistency (I2V-SC), I2V-Background Consistency (I2V-BC), Subject Consistency (SC), Background Consistency (BC), Motion Smoothness (MS), Dynamic Degree (DD), Aesthetic Quality (AQ), Imaging Quality (IQ). Total Score: weighted average of all dimensions which evaluates the overall quality. Scores are calculated with the official code of VBench. $^{*}:$ results reported in \citet{huang2024vbench++}.}
\label{table: vbench-i2v}
\begin{center}
\begin{tabular}{lllllllllll}
\toprule
\multirow{2}{*}{Model} & \multirow{2}{*}{\shortstack{\textbf{Total}\\\textbf{Score}}} & \multicolumn{9}{c}{\textbf{Detailed Qulity Dimensions}} \\
\cmidrule(r){3-11}
& & \footnotesize \makecell[l]{\textbf{CM}} & \footnotesize \makecell[l]{\textbf{I2V-SC}} & \footnotesize \makecell[l]{\textbf{I2V-BC}} & \footnotesize \makecell[l]{\textbf{SC}} & \footnotesize  \makecell[l]{\textbf{BC}} &\footnotesize  \makecell[l]{\textbf{MS}} & \footnotesize \makecell[l]{\textbf{DD}} & \footnotesize \makecell[l]{\textbf{AQ}} & \makecell[l]{\textbf{IQ}} \\
\midrule
DynamiCrafter-256 & 84.35 & 22.18 & 95.40 & 96.22 & 94.60 & 98.30 & \textbf{97.82}$^\ddag$ & 38.69 & \textbf{59.40}$^\dag$ & 62.29 \\
SEINE-$256 \times 256$ & 82.12 & 15.91 & 93.45 & 94.21 & 93.94 & 97.01 & 96.20 & 24.55 & 56.55 & \textbf{70.52}$^\ddag$ \\
SEINE-$512 \times 320^{*}$ & 83.49 & 23.36 & 94.85 & 94.02 & 94.20 & 97.26 & 96.68 & 34.31 & 58.42 & \textbf{70.97}$^\dag$ \\
SparseCtrl & 80.34 & 25.82 & 88.39 & 92.46 & 85.08 & 93.81 & 94.25 & \textbf{81.95}$^\dag$ & 49.88 & 69.35 \\
ConsistI2V$^{*}$ & 83.30 & \textbf{33.60}$^\ddag$ & 94.69 & 94.57 & \textbf{95.27}$^\dag$ & 98.28 & 97.38 & 18.62 & 59.00 & 66.92 \\
FrameBridge-VideoCrafter & \textbf{85.37}$^\ddag$ & 30.72 & \textbf{96.24}$^\dag$ & \textbf{97.25}$^\dag$ & \textbf{94.63}$^\ddag$ & \textbf{98.92}$^\dag$ & \textbf{98.51}$^\dag$ & 35.77 & \textbf{59.38}$^\ddag$ & 63.28 \\
FrameBridge-CogVideoX & \textbf{85.93}$^\dag$ & \textbf{92.06}$^\dag$ & \textbf{95.42}$^\ddag$ & \textbf{97.13}$^\ddag$ & 93.60 & \textbf{98.62}$^\ddag$ & 97.57 & \textbf{48.29}$^\ddag$ & 54.28 & 60.00 \\
\bottomrule
\end{tabular}
\end{center}
\end{table*}

Considering bridge models synthesize target data with iterative sampling steps, we develop a one-step mapping-based prior network taking both image latent $z^i$ and text or label condition $c$ as input, and separately train the prior network with a regression loss in latent space:

\begin{equation}
\label{eq:RP training objective}
    \mathcal{L}_{p}(\eta) = \E_{(\rvz, z^i, c) \sim p_{data}(\rvz, z^i, c)} \left[ \norm{F_\eta(z^i, c) - \rvz}^2 \right].
\end{equation}

With this objective, it can be proved that $F_\eta(z^i, c)$ learns to predict the mean value of subsequent frames, as shown in~Appendix \ref{appendix:proof}. Given pre-trained $F_\eta(z^i, c)$, we build FrameBridge-NP from its output and target video latent $\rvz$ by replacing the prior $\rvz_T$ in \cref{eq:training objective} with the neural prior $F_\eta(z^i, c)$. More details of the training and sampling algorithm can be found in Appendix \ref{appendix:pseudo code}.

In generation, neural prior model $F_\eta(z^i, c)$ provide a coarse estimation with a single deterministic step, which is closer to the target than the provided initial frame, and bridge model synthesize the video target with a coarse-to-fine iterative sampling process. Although more advanced methods can be designed to further improve neural prior, we present a design with simple training objective and one-step sampling, demonstrating the performance of enhancing prior information on I2V synthesis.

\footnotetext{https://github.com/Doubiiu/DynamiCrafter}

\section{Experiments}

\label{section:experiments}

We carry out experiments on UCF-101 \citep{soomro2012ucf101} and WebVid-2M \citep{bain2021frozen} datasets to demonstrate the advantages of our data-to-data generation framework for I2V tasks. More details can be found in Appendix \ref{appendix:experiment details}.

\begin{table}[h]
\footnotesize
\caption{Non-zero-shot I2V generation on UCF-101. The best and second results are marked with $\dag$ and $\ddag$.}
\label{table: non-zero-shot UCF-101}
\begin{center}
\begin{tabular}{llll}
\toprule
Method & FVD $\downarrow$ & IS $\uparrow$ & PIC $\uparrow$ \\
\midrule
ExtDM & 649 & 21.37 & --  \\
VDT-I2V & 171 & 62.61 &  0.7401 \\
FrameBridge & \textbf{154}$^\ddag$ & \textbf{64.01}$^\dag$  & \textbf{0.7443}$^\ddag$ \\
FrameBridge-NP & \textbf{122}$^\dag$ & \textbf{63.60}$^\ddag$  & \textbf{0.7662}$^\dag$  \\
\bottomrule
\end{tabular}
\end{center}
\end{table}

\begin{table}[h]
\footnotesize
\caption{Ablation of SAF technique on UCF-101 (non-zero-shot).}
\label{table: non-zero-shot UCF-101 ablation}
\begin{center}
\begin{tabular}{lllll}
\toprule
Method & Iterations & FVD $\downarrow$ & IS $\uparrow$ & PIC $\uparrow$ \\
\midrule
Diffusion & 10k & 176 & 53.60 & 0.7011 \\
Bridge (w/o SAF) & 10k & 176 & 53.93 & 0.7371 \\
Bridge (w/o SAF) & 5k & 284 & 49.40 & 0.6557 \\
Bridge (w/ SAF) & 5k & \textbf{141} & \textbf{55.98} & \textbf{0.8200} \\
\bottomrule
\end{tabular}
\end{center}
\end{table}

\subsection{Fine-tuning from pre-trained diffusion models}

\label{section:exp fine-tuning}

Following \citet{xing2023dynamicrafter}, we fine-tune text-conditional FrameBridge model with replicated prior $\rvz^i$ from the open-sourced T2V diffusion model VideoCrafter1 \citep{chen2023videocrafter1} and CogVideoX-2B \citep{yang2024cogvideox} on WebVid-2M dataset.

\paragraph{Comparison with Baselines} We choose DynamiCrafter \citep{xing2023dynamicrafter}, SEINE \citep{chen2023seine}, I2VGen-XL \citep{zhang2023i2vgen}, SVD \citep{blattmann2023stable}, ConsistI2V \citep{ren2024consisti2v} and SparseCtrl \citep{guo2025sparsectrl} as text-conditional I2V baselines. Table \ref{table: zero-shot UCF-101 fine-tune} shows zero-shot metrics on UCF-101 and MSR-VTT after fine-tuning on WebVid-2M. \textit{Note that DynamiCrafter trained with 6.4M videos is a direct counterpart of FrameBridge-VideoCrafter, which uses the same model architecture, base T2V diffusion model and training budget}, which shows that powerful I2V models can achieve better generation performance by replacing diffusion process with a \textit{data-to-data} bridge process.
We also evaluate FrameBridge and other baselines with a comprehensive benchemark for video quality, \ie, VBench-I2V \citep{huang2024vbench, huang2024vbench++} (see Table \ref{table: vbench-i2v}, all the scores are calculated with the official code\footnote{https://github.com/Vchitect/VBench}).  FrameBridge can effectively leverage the knowledge from pre-trained T2V diffusion models and generate videos with higher quality and consistency than the diffusion counterparts.
We further discuss the trade-off between the dynamic degree and consistency in Appendix \ref{appendix:discussion-dynamic}. Qualitative results are shown in Figure \ref{figure:compare}. 
In the Figure, both FrameBridge and DynamiCrafter model are fine-tuned from VideoCrafter1 with 20k steps. 
\begin{table}[h]
\footnotesize
\caption{Ablation of neural prior. Condition means whether the model conditions on $F_\eta(z^i, c)$.}
\label{table: non-zero-shot UCF-101 prior ablation}
\begin{center}
\begin{tabular}{llll}
\toprule
Method & Prior & Condition & FVD $\downarrow$ \\
\midrule
VDT-I2V & Gaussian & \XSolidBrush & 171 \\
VDT-I2V & Gaussian & \Checkmark & 132 \\
FrameBridge & replicated & \XSolidBrush & 154 \\
FrameBridge & replicated & \Checkmark & 129 \\
FrameBridge-NP & neural & \Checkmark & \textbf{122} \\
\bottomrule
\end{tabular}
\end{center}
\end{table}
To the best of our knowledge, our trial is the first time to fine-tune bridge models from pre-trained diffusion models.

\subsection{Neural Prior for Bridge Models}
\label{section:exp prior}

We train class-conditional FrameBridge model with neural prior (FrameBridge-NP) on UCF-101 based on the model of Latte-S/2 \citep{ma2024latte} by replacing diffusion process with the Bridge-gmax bridge process \citep{chen2023schrodinger}.

\paragraph{Comparison with Baselines} We reproduce two diffusion models ExtDM \citep{zhang2024extdm} and VDT \citep{lu2023vdt} on UCF-101 dataset for the class-conditional I2V task as our baselines. Table \ref{table: non-zero-shot UCF-101} shows that FrameBridge-NP has superior video quality and consistency with condition images. Here VDT-I2V is a direct counterpart of FrameBridge models as they share the same network architecture and training configurations. More qualitative results are shown in Appendix \ref{appendix:demo}. The experiments reveal that bridge-based I2V models outperform their diffusion counterparts with both replicated prior and neural prior, justifying the usage of the \textit{data-to-data} generation process for I2V tasks. Additionally, FrameBridge can further benefit from neural prior $F_\eta(z^i, c)$ as it actually narrows the gap between the prior and data distribution of bridge process.

\subsection{Ablation Studies}

\label{section:exp ablation}

\paragraph{SNR-Aligned Fine-tuning} When fine-tuned with SAF, FrameBridge can leverage the pre-trained T2V diffusion models efficiently and effectively. To ablate on the SAF technique, we fine-tune a pre-trained class-conditional video generation model Latte-XL/2 on UCF-101. Table \ref{table: non-zero-shot UCF-101 ablation} shows that SAF improves fine-tuning performance of FrameBridge. To conduct an ablation under the WebVid-2M training setting, we also fine-tune FrameBridge models from VideoCrafter1 and CogVideoX-2B with the same configuration except the usage of SAF technique, and compare the zero-shot metrics in Appendix \ref{appendix: saf-webvid}.

\paragraph{Neural Prior} To showcase the effectiveness of neural prior, we compare five different models varying in priors and network conditions. More details of the configurations can be found in Appendix \ref{appendix:experiment details}. Results in Table \ref{table: non-zero-shot UCF-101 prior ablation} reveal that $F_\eta(z^i, c)$ is indeed more informative than a single frame $z^i$ and can be fully utilized by FrameBridge through the change of prior.

\section{Conclusions}
\label{section:conclusions}
In this work, we propose FrameBridge, building a \textit{data-to-data} generation process, which matches the \textit{frame-to-frames} nature of this task, and therefore further improving the I2V synthesis quality of strong diffusion baselines. Additionally, targeting at two typical scenarios of training I2V models, namely fine-tuning from pre-trained diffusion models and training from scratch, we present SNR-Aligned Fine-tuning (SAF) and neural prior respectively to further improve the generation quality of FrameBridge. Extensive experiments show that FrameBridge generate videos with enhanced appearance consistency with image condition and improved temporal coherence, demonstrating the advantages of FrameBridge and the effectiveness of two proposed techniques.

\section*{Impact Statement}

Our method FrameBridge can improve the quality of image-to-video generation, and the proposed techniques, \ie SAF and neural prior, can furthur enhance FrameBridge. However, our method is broadly applicable, and FrameBridge I2V models can be fine-tuned from various T2V diffusion models, which potentially leads to the misuse of I2V generation models.

\bibliography{example_paper}
\bibliographystyle{icml2025}
\newpage
\onecolumn
\appendix

\section{Proof and Derivation}
\label{appendix:proof}
\subsection{Basics of Denoising Diffusion Bridge Model (DDBM)}

We provide the derivations of $p_{t, bridge}(\rvz_t | \rvz_0, \rvz_T, z^i, c)$ and $h(\rvz, t, \rvy, z^i, c)$ used in Section \ref{section:methods framebridge}.

Similar to the proofs in \citep{zhou2023denoising}, we calculate $p_{t, bridge}(\rvz_t | \rvz_0, \rvz_T, z^i, c)$ by applying Bayes' rule:
\begin{equation}
\begin{aligned}
    p_{t, bridge}(\rvz_t | \rvz_0, \rvz_T, z^i, c) = p_{t, diff}(\rvz_t | \rvz_0, \rvz_T, z^i, c) &= \frac{p_{T, diff}(\rvz_T | \rvz_t, \rvz_0, z^i, c) p_{t, diff}(\rvz_t | \rvz_0, z^i, c)}{p_{t, diff}(\rvz_T | \rvz_0, z^i, c)} \\
    &\overset{\circled{1}}{=} \frac{p_{T, diff}(\rvz_T | \rvz_t) p_{t, diff}(\rvz_t | \rvz_0)}{p_{T, diff}(\rvz_T | \rvz_0)}.
\end{aligned}
\end{equation}

$\circled{1}$ uses the Markovian of the diffusion process $\rvz_t$ \citep{kingma2021variational}.

The perturbation kernels $p_{T, diff}(\rvz_T | \rvz_t), p_{t, diff}(\rvz_t | \rvz_0), p_{T, diff}(\rvz_T | \rvz_0)$ is Gaussian and takes the form of:

\begin{equation}
\begin{aligned}
    &p_{T, diff}(\rvz_T | \rvz_t) = \mathcal{N}(\rvz_T; \frac{\alpha_T}{\alpha_t} \rvz_t, (\sigma_T^2 - \frac{\alpha_T^2}{\alpha_t^2} \sigma_t^2) I), \\
    &p_{t, diff}(\rvz_t | \rvz_0) = \mathcal{N}(\rvz_t; \alpha_t \rvz_0, \sigma_t^2 I), \\
    &p_{T, diff}(\rvz_T | \rvz_0) = \mathcal{N}(\rvz_T; \alpha_T \rvz_0, \sigma_T^2 I).
\end{aligned}
\end{equation}

Following \citep{zhou2023denoising}, it can be derived that $p_{t, bridge}(\rvz_t | \rvz_T, \rvz_0, z^i, c)$ is also Gaussian, and $p_{t, bridge}(\rvz_t | \rvz_T, \rvz_0, z^i, c) = \mathcal{N}(\rvz_t; \mu_t(\rvz_0, \rvz_T), \sigma_{t, bridge}^2 I)$, where
\begin{equation}
\begin{aligned}
\label{eq:bridge marginal}
    &\mu_t(\rvz_0, \rvz_T) = \alpha_t (1 - \frac{\mathrm{SNR}_T}{\mathrm{SNR}_t}) \rvz_0 + \frac{\mathrm{SNR}_T}{\mathrm{SNR}_t} \frac{\alpha_t}{\alpha_T} \rvz_T, \\
    &\sigma_{t,bridge}^2 = \sigma_t^2(1 - \frac{\mathrm{SNR}_T}{\mathrm{SNR}_t}).
\end{aligned}
\end{equation}
Specifically, $\rvz_t$ of bridge process can be reparameterized by $\rvz_t = a_t \rvz_0 + b_t \rvz_T + c_t \bm\eps$, where
\begin{equation}
\begin{aligned}
\label{eq:reparameterization of bridge}
    a_t &= \alpha_t (1 - \frac{\mathrm{SNR}_T}{\mathrm{SNR}_t}), \\
    b_t &= \frac{\mathrm{SNR}_T}{\mathrm{SNR}_t} \frac{\alpha_t}{\alpha_T}, \\
    c_t &= \sqrt{\sigma_t^2(1 - \frac{\mathrm{SNR}_T}{\mathrm{SNR}_t})}.
\end{aligned}
\end{equation}

Here, $\mathrm{SNR_t} = \frac{\alpha_t^2}{\sigma_t^2}$ \citep{kingma2021variational} is the signal-to-noise ratio of diffusion process.

Then we calculate $\vh(\rvz, t, \rvy, z^i, c) = \nabla_{\rvz_t} \log p_{T, diff}(\rvz_T | \rvz_t) |_{\rvz_t = \rvz, \rvz_T = \rvy}$.

As $p_{T, diff}(\rvz_T | \rvz_t) = \mathcal{N}(\rvz_T; \frac{\alpha_T}{\alpha_t} \rvz_t, (\sigma_T^2 - \frac{\alpha_T^2}{\alpha_t^2} \sigma_t^2) I)$, we have
\begin{equation}
    p_{T, diff}(\rvz_T | \rvz_t) = \frac{1}{\sqrt{2 \pi (\sigma_T^2 - \frac{\alpha_T^2}{\alpha_t^2} \sigma_t^2)}^D} \exp \left(-\frac{\norm{\rvz_T - \frac{\alpha_T}{\alpha_t}\rvz_t}^2}{2(\sigma_T^2 - \frac{\alpha_T^2}{\alpha_t^2} \sigma_t^2)}\right),
\end{equation}

\begin{equation}
    \log p_{T, diff}(\rvz_T | \rvz_t) = -\frac{\norm{\rvz_T - \frac{\alpha_T}{\alpha_t}\rvz_t}^2}{2(\sigma_T^2 - \frac{\alpha_T^2}{\alpha_t^2} \sigma_t^2)} + C,
\end{equation}
where $C$ is a constant independent of $\rvz_T$.

\begin{equation}
    \nabla_{\rvz_t} \log p_{T, diff}(\rvz_T | \rvz_t) = \nabla_{\rvz_t}\left(-\frac{\norm{\rvz_T - \frac{\alpha_T}{\alpha_t}\rvz_t}^2}{2(\sigma_T^2 - \frac{\alpha_T^2}{\alpha_t^2} \sigma_t^2)}\right) = -\frac{\rvz_T - \frac{\alpha_T}{\alpha_t}\rvz_t}{(\sigma_T^2 - \frac{\alpha_T^2}{\alpha_t^2} \sigma_t^2)}.
\end{equation}
So,  $\vh(\rvz, t, \rvy, z^i, c) = -\frac{\rvy - \frac{\alpha_T}{\alpha_t}\rvz}{(\sigma_T^2 - \frac{\alpha_T^2}{\alpha_t^2} \sigma_t^2)}$.
Note that for the diffusion process we commonly use, $\frac{\alpha_T}{\alpha_t} \approx 0$ and $\sigma_T \approx 1$, and we have $\vh(\rvz, t, \rvy, z^i, c) \approx -\rvy$.

\subsection{Parameterization of FrameBridge}
\begin{prop}
\label{prop:parameterization}
    The score estimation $\vs_\theta(\rvz_t, t, \rvz_T, z^i, c)$ of bridge process $p_{t, bridge}(\rvz_t | \rvz_T, z^i, c)$ can be reparamterized by
    \begin{equation}
    \label{eq:bridge parameterization}
        \vs_\theta(\rvz_t, t, \rvz_T, z^i, c) = -\frac{1}{\sigma_t} \bm\epsilon_\theta^{\hat\Psi}(\rvz_t, t, \rvz_T, z^i, c) - \frac{\mathrm{SNR}_T}{\mathrm{SNR}_t} \frac{\rvz_t - \frac{\alpha_t}{\alpha_T}\rvz_T}{\sigma_t^2(1 - \frac{\mathrm{SNR}_T}{\mathrm{SNR}_t})}, 
    \end{equation}
    where $\mathrm{SNR}_t = \frac{\alpha_t^2}{\sigma_t^2}$, and $\bm\epsilon_\theta^{\hat\Psi}(\rvz_t, t, \rvz_T, z^i, c)$ is trained with the objective
    \begin{equation}
        \mathcal{L}_{bridge}(\theta) = \E_{\substack{(\rvz_0, z^i, c) \sim p_{data}(\rvz_0, z^i, c), \\ \rvz_T = \rvz^i, t, \rvz_t \sim p_{t, bridge}(\rvz_t | \rvz_0, \rvz_T, z^i, c)}}\left[\tilde{\lambda}(t) \norm{\bm\eps_\theta^{\hat\Psi}(\rvz_t, t, \rvz_T, z^i, c) - \frac{\rvz_t - \alpha_t \rvz_0}{\sigma_t}}^2\right].
    \end{equation}
    Here $\tilde{\lambda}(t)$ is the weight function of timestep $t$ and we take $\tilde{\lambda}(t) = 1$ unless otherwise specified.

    When $\mathrm{SNR_T} \approx 0$(which is often the case for diffusion process), there exists $\epsilon$ such that
    \begin{equation}
        \vs_\theta(\rvz_t, t, \rvz_T, z^i, c) \approx -\frac{1}{\sigma_t} \bm\epsilon_\theta^{\hat\Psi}(\rvz_t, t, \rvz_T, z^i, c), \quad \forall t \in [\epsilon, T - \epsilon].
    \end{equation}
\end{prop}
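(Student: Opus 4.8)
The plan is to first write the true bridge score $\vs(\rvz_t,t,\rvz_T,z^i,c)=\nabla_{\rvz_t}\log p_{t,bridge}(\rvz_t\mid\rvz_T,z^i,c)$ in closed form, split it into a piece that is analytic in $(\rvz_t,\rvz_T)$ and a piece that must be learned, and then identify the learned piece with the minimizer of $\mathcal{L}_{bridge}$. I would begin from the marginalization $p_{t,bridge}(\rvz_t\mid\rvz_T,z^i,c)=\int p_{t,bridge}(\rvz_t\mid\rvz_0,\rvz_T,z^i,c)\,p(\rvz_0\mid\rvz_T,z^i,c)\,\mathrm{d}\rvz_0$, differentiate under the integral, and apply Bayes' rule to obtain the denoising-score identity $\vs(\rvz_t,t,\rvz_T,z^i,c)=\E_{\rvz_0\mid\rvz_t,\rvz_T,z^i,c}\!\big[\nabla_{\rvz_t}\log p_{t,bridge}(\rvz_t\mid\rvz_0,\rvz_T,z^i,c)\big]$. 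Since the conditional kernel is the Gaussian $\mathcal{N}(\rvz_t;\mu_t(\rvz_0,\rvz_T),\sigma_{t,bridge}^2 I)$ established in \cref{eq:bridge marginal}--\cref{eq:reparameterization of bridge}, the inner gradient equals $-(\rvz_t-\mu_t(\rvz_0,\rvz_T))/\sigma_{t,bridge}^2$ with $\mu_t(\rvz_0,\rvz_T)=a_t\rvz_0+b_t\rvz_T$ and $\sigma_{t,bridge}^2=\sigma_t^2(1-\mathrm{SNR}_T/\mathrm{SNR}_t)$.

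The next step is the key algebraic rearrangement. Writing $r_t:=\mathrm{SNR}_T/\mathrm{SNR}_t$ and substituting $a_t=\alpha_t(1-r_t)$, $b_t=r_t\alpha_t/\alpha_T$, I would verify the identity $\rvz_t-\mu_t(\rvz_0,\rvz_T)=(1-r_t)(\rvz_t-\alpha_t\rvz_0)+r_t\big(\rvz_t-\tfrac{\alpha_t}{\alpha_T}\rvz_T\big)$, which confines the entire $\rvz_0$-dependence to the diffusion-style residual $\rvz_t-\alpha_t\rvz_0$. Dividing by $\sigma_{t,bridge}^2=\sigma_t^2(1-r_t)$, the factor $(1-r_t)$ cancels in the first term, yielding $\nabla_{\rvz_t}\log p_{t,bridge}(\rvz_t\mid\rvz_0,\rvz_T,z^i,c)=-\tfrac{1}{\sigma_t}\cdot\tfrac{\rvz_t-\alpha_t\rvz_0}{\sigma_t}-\tfrac{\mathrm{SNR}_T}{\mathrm{SNR}_t}\cdot\tfrac{\rvz_t-\frac{\alpha_t}{\alpha_T}\rvz_T}{\sigma_t^2(1-\mathrm{SNR}_T/\mathrm{SNR}_t)}$, where the second summand contains no $\rvz_0$. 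Plugging this into the denoising identity leaves the second summand unchanged and replaces $\tfrac{\rvz_t-\alpha_t\rvz_0}{\sigma_t}$ by its conditional mean $\E_{\rvz_0\mid\rvz_t,\rvz_T,z^i,c}\!\big[\tfrac{\rvz_t-\alpha_t\rvz_0}{\sigma_t}\big]$.

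It then remains to match the learned term and handle the limit. Because $\mathcal{L}_{bridge}(\theta)$ regresses $\bm\eps_\theta^{\hat\Psi}(\rvz_t,t,\rvz_T,z^i,c)$ onto $\tfrac{\rvz_t-\alpha_t\rvz_0}{\sigma_t}$ in weighted $L^2$ with the conditioning variables $(t,\rvz_T,z^i,c)$ held fixed, its population minimizer over measurable functions is exactly that conditional mean for any positive weight $\tilde\lambda(t)$; substituting $\bm\eps_\theta^{\hat\Psi}$ for the conditional mean gives precisely \cref{eq:bridge parameterization}. For the asymptotic claim I would fix $\epsilon>0$, use monotonicity of $\mathrm{SNR}_t$ to bound $r_t\le\mathrm{SNR}_T/\mathrm{SNR}_{T-\epsilon}$ on $[\epsilon,T-\epsilon]$, and observe that the coefficient $\tfrac{\mathrm{SNR}_T}{\mathrm{SNR}_t\sigma_t^2(1-r_t)}$ multiplying $\rvz_t$ is $O(\mathrm{SNR}_T)$, while in the $\tfrac{\alpha_t}{\alpha_T}\rvz_T$ contribution the $\alpha_t/\alpha_T$ blow-up is dominated by $\mathrm{SNR}_T=\alpha_T^2/\sigma_T^2$, since $\mathrm{SNR}_T\cdot\tfrac{\alpha_t}{\alpha_T}=\tfrac{\alpha_T\alpha_t}{\sigma_T^2}=O(\sqrt{\mathrm{SNR}_T})$ after using $\sigma_T\approx1$. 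Hence, with $\rvz_t,\rvz_T$ controlled (e.g.\ in expectation), the whole second summand vanishes as $\mathrm{SNR}_T\to0$, leaving $\vs_\theta\approx-\tfrac{1}{\sigma_t}\bm\eps_\theta^{\hat\Psi}$ uniformly on $[\epsilon,T-\epsilon]$.

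The main obstacle is spotting the algebraic split $\rvz_t-\mu_t=(1-r_t)(\rvz_t-\alpha_t\rvz_0)+r_t\big(\rvz_t-\tfrac{\alpha_t}{\alpha_T}\rvz_T\big)$: this is the unique decomposition that simultaneously (i) makes $(1-r_t)$ cancel against $\sigma_{t,bridge}^2=\sigma_t^2(1-r_t)$ and (ii) pushes all $\rvz_0$-dependence into the single residual $\rvz_t-\alpha_t\rvz_0$ that the network is asked to predict, so that the conditional expectation only touches that one term. Once this is in place, the denoising identity, the Gaussian-gradient computation, and ``$L^2$-minimizer equals conditional expectation'' are all standard, and the $\mathrm{SNR}_T\to0$ estimate is a short calculation using the explicit coefficients from \cref{eq:reparameterization of bridge}.
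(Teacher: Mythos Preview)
Your proposal is correct and follows essentially the same route as the paper. Both proofs compute the Gaussian conditional score $-(\rvz_t-a_t\rvz_0-b_t\rvz_T)/c_t^2$, perform the same algebraic split isolating the residual $(\rvz_t-\alpha_t\rvz_0)/\sigma_t$ from an analytic $(\rvz_t,\rvz_T)$-term (the paper does this by substituting $\rvz_0=(\rvz_t-\sigma_t\bm\eps^{\hat\Psi})/\alpha_t$, you do it via the $(1-r_t)/r_t$ decomposition of $\rvz_t-\mu_t$), and then invoke denoising bridge score matching; your treatment of the $\mathrm{SNR}_T\to0$ limit is slightly more detailed than the paper's but reaches the same conclusion by the same mechanism.
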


\begin{proof}
    We denote the desnoising target $\frac{\rvz_t - \alpha_t \rvz_0}{\sigma_t}$ by $\bm\eps^{\hat\Psi}(\rvz_t, \rvz_0, t)$, and define $a_t = \alpha_t(1 - \frac{\mathrm{SNR}_T}{\mathrm{SNR}_t})$, $b_t = \frac{\mathrm{SNR}_T}{\mathrm{SNR}_t}\frac{\alpha_t}{\alpha_T}$, $c_t = \sqrt{\sigma_t^2(1 - \frac{\mathrm{SNR}_T}{\mathrm{SNR}_t})}$.

    From \cref{eq:bridge marginal}, we have
    \begin{equation}
    \label{eq:dbsm target}
        \nabla_\rvz \log p_{t, bridge}(\rvz | \rvz_0, \rvz_T)|_{\rvz = \rvz_t, \rvz_T = \rvz^i} = -\frac{\rvz_t - a_t \rvz_0 - b_t \rvz^i}{c_t^2},
    \end{equation}
    which is the target of Denoising Bridge Score Matching \citep{zhou2023denoising}. Our goal is to represent this target with $\rvz_t$, $\rvz_T$, and $\bm\eps^{\hat\Psi}(\rvz_t, \rvz_0, t)$.

    From the definition of $\bm\eps^{\hat\Psi}(\rvz_t, \rvz_0, t)$, we have
    \begin{equation}
        \rvz_0 = \frac{\rvz_t - \sigma_t \bm\eps^{\hat\Psi}(\rvz_t, \rvz_0, t)}{\alpha_t}. 
    \end{equation}
    Plug it into \cref{eq:dbsm target}, it can be derived that
    \begin{equation}
    \begin{aligned}
        \nabla_\rvz \log p_{t, bridge}(\rvz | \rvz_0, \rvz_T)|_{\rvz = \rvz_t, \rvz_T = \rvz^i} &= -\frac{\rvz_t - a_t \frac{\rvz_t - \sigma_t \bm\eps^{\hat\Psi}(\rvz_t, \rvz_0, t)}{\alpha_t} - b_t \rvz^i}{c_t^2} \\
        &= -\frac{\alpha_t \rvz_t - a_t \rvz_t + a_t \sigma_t \bm\eps^{\hat\Psi}(\rvz_t, \rvz_0, t) - \alpha_t b_t \rvz_T}{\alpha_t c_t^2} \\ 
        &= -\frac{a_t \sigma_t \bm\eps^{\hat\Psi}(\rvz_t, \rvz_0, t)}{\alpha_t c_t^2} - \frac{(\alpha_t - a_t)\rvz_t - \alpha_t b_t \rvz^i}{\alpha_t c_t^2} \\
        &= -\frac{1}{\sigma_t}\bm\eps^{\hat\Psi}(\rvz_t, \rvz_0, t) - \frac{\alpha_t \frac{\mathrm{SNR}_T}{\mathrm{SNR}_t} \rvz_t - \frac{\alpha_t^2}{\alpha_T}\frac{\mathrm{SNR}_T}{\mathrm{SNR}_t} \rvz^i}{\alpha_t \sigma_t^2 (1 - \frac{\mathrm{SNR}_T}{\mathrm{SNR}_t})} \\
        &= -\frac{1}{\sigma_t}\bm\eps^{\hat\Psi}(\rvz_t, \rvz_0, t) - \frac{\mathrm{SNR}_T}{\mathrm{SNR}_t} \frac{\rvz_t - \frac{\alpha_t}{\alpha_T} \rvz^i}{\sigma_t^2 (1 - \frac{\mathrm{SNR}_T}{\mathrm{SNR}_t})},
    \end{aligned}
    \end{equation}
    As the Denoising Bridge Score Matching takes the form of
    \begin{equation}
        \mathcal{L}_{bridge}(\theta) = \E_{(\rvz_0, z^i, c). \rvz_T = \rvz^i, t, \rvz_t}\left[\lambda(t) \norm{\vs_\theta(\rvz_t, t, \rvz_T, z^i, c) - \nabla_\rvz \log p_{t, bridge}(\rvz | \rvz_0, \rvz_T)|_{\rvz = \rvz_t, \rvz_T = \rvz^i}}^2\right],
    \end{equation}
    when we parameterize $\vs_\theta(\rvz_t, t, \rvz_T, z^i, c) = -\frac{1}{\sigma_t} \bm\epsilon_\theta^{\hat\Psi}(\rvz_t, t, \rvz_T, z^i, c) - \frac{\mathrm{SNR}_T}{\mathrm{SNR}_t} \frac{\rvz_t - \frac{\alpha_t}{\alpha_T}\rvz_T}{\sigma_t^2(1 - \frac{\mathrm{SNR}_T}{\mathrm{SNR}_t})}$, the training objective can be written as
    \begin{equation}
        \mathcal{L}_{bridge}(\theta) = \E_{(\rvz_0, z^i, c). \rvz_T = \rvz^i, t, \rvz_t}\left[\frac{\lambda(t)}{\sigma_t^2} \norm{\bm\eps_\theta^{\hat\Psi}(\rvz_t, t, \rvz_T, z^i, c) - \bm\eps^{\hat\Psi}(\rvz_t, \rvz_0, t)}^2\right],
    \end{equation}
    which proves the first part of the proposition if we take $\tilde{\lambda}(t) = \frac{\lambda(t)}{\sigma_t^2}$.

    For the second part, when $\mathrm{SNR}_T \approx 0$, there exists an $\eps > 0$, such that $\frac{1}{\sigma_t^2 (1 - \frac{\mathrm{SNR}_T}{\mathrm{SNR_t}})}$ has an upper bound $M$. Since $\frac{\mathrm{SNR}_T}{\mathrm{SNR_t}} \frac{\alpha_t}{\alpha_T} = \alpha_T \frac{\sigma_t^2}{\alpha_t \sigma_T^2} \approx 0$ when $\mathrm{SNR}_T \approx 0$, it can be directly inferenced from \cref{eq:bridge parameterization} that $\vs_\theta(\rvz_t, t, \rvz_T, z^i, c) \approx -\frac{1}{\sigma_t} \bm\epsilon_\theta^{\hat\Psi}(\rvz_t, t, \rvz_T, z^i, c)$.
\end{proof}
\begin{remark}
    From the first part of the proposition, we parameterize bridge models to predict $\frac{\rvz_t - \alpha_t \rvz_0}{\sigma_t}$. It is similar to that used in \citet{chen2023schrodinger} although their parameterization is derived from the forward-backward diffusion process of Schr\"{o}dinger Bridge problems. The statement and proof of this proposition reveals that DDBM and Diffusion Schr\"{o}dinger Bridges are closely related. Additionally, the second part shows that our parameterization resembles the Denoising Score Matching in diffusion models.
\end{remark}

\subsection{SNR-Aligned Fine-tuning}
\paragraph{Existence and Uniqueness of $\tilde{t}$} In Section \ref{section:methods SAF}, we need to find a $\tilde{t}$ such that $\alpha_{\tilde{t}} = \frac{a_t}{\sqrt{a_t^2 + c_t^2}}$,  $\sigma_{\tilde{t}} = \frac{c_t}{\sqrt{a_t^2 + c_t^2}}$. Since $\frac{a_t^2}{c_t^2} = \frac{\alpha_t^2}{\sigma_t^2}(1 - \frac{\mathrm{SNR}_T}{\mathrm{SNR}_t}) = \mathrm{SNR}_t - \mathrm{SNR}_T$, it is a monotonically decreasing function of $t$. As $\mathrm{SNR}_t$ is also a monotonically decreasing function which ranges over $(0, \infty)$, we can take $\tilde{t} = \mathrm{SNR}^{-1}(\frac{a_t^2}{c_t^2})$ and the uniqueness of such $\tilde{t}$ can also be guaranteed. Next, we provide a more general form of SAF, where the schedule $\{\alpha_t, \sigma_t\}_{t \in [0, T]}$ of the pre-trained diffusion models and bridge models are not necessarily the same.

\begin{prop}
\label{prop:SAF}
    Suppose we fine-tune a Gaussian diffusion model $\tilde{\bm\epsilon}_\eta(\rvz_t, t, c)$ with schedule $\{\tilde{\alpha}_t, \tilde{\sigma}_t\}_{t \in [0, T]}$ to a diffusion bridge model $\bm\epsilon_{\theta, bridge}^{\hat\Psi}(\rvz_t, t, \rvz_T, z^i, c) \triangleq
    \bm\epsilon_{\theta, align}^{\hat\Psi}(\tilde{\rvz}_t, \tilde{t}, \rvz_T, z^i, c)$ with schedule $\{\alpha_t, \sigma_t\}_{t \in [0, T]}$.
    If we use the same dataset $p_{data}(\rvz_0, z^i, c)$ for training $\tilde{\bm\epsilon}_\eta(\rvz_t, t, c)$ and fine-tuning $\bm\epsilon_{\theta, align}^{\hat\Psi}(\tilde{\rvz}_t, \tilde{t}, \rvz_T, z^i, c)$. Then, for each $c$, the input $(\rvz_t, t)$ of $\tilde{\bm\epsilon}_\eta$ has the same marginal distribution as the input $(\tilde{\rvz}_t, \tilde{t})$ of $\bm\epsilon_{\theta, align}^{\hat\Psi}(\tilde{\rvz}_t, \tilde{t}, \rvz_T, z^i, c)$. Here
    \begin{equation}
    \begin{aligned}
    \label{eq:SAF}
        &\tilde{\rvz}_t = \frac{\rvz_t - b_t \rvz^i}{\sqrt{a_t^2 + c_t^2}},\\
        &\tilde{t} = \widetilde{SNR}^{-1}(\frac{a_t^2}{c_t^2}).
    \end{aligned}
    \end{equation}
    ($\widetilde{SNR} = \frac{\tilde{\alpha}_t^2}{\tilde{\sigma}_t^2}$ is the signal-to-noise ratio of pre-trained diffusion models.)
\end{prop}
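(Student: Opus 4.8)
The plan is to fix the guidance $c$ and work conditionally on a data pair $(\rvz_0, z^i)$ drawn from $p_{data}(\cdot,\cdot\mid c)$. I will compute the conditional law of the reparameterized bridge input $\tilde\rvz_t$ and of the pre-trained diffusion input $\rvz_{\tilde t} = \tilde\alpha_{\tilde t}\rvz_0 + \tilde\sigma_{\tilde t}\bm\eps$, observe that both are isotropic Gaussians centered on a scalar multiple of $\rvz_0$, and reduce the claim to checking that, under the time change $t\mapsto\tilde t$ of \cref{eq:SAF}, their means and covariances coincide. Marginalizing the resulting identity over $\rvz_0\sim p_{data}(\cdot\mid c)$ and over the timestep (which $t\mapsto\tilde t$ transports between the two processes) then gives the asserted equality of marginal laws.

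The first substantive step is computing $\tilde\rvz_t$, where the dependence on $\rvz^i$ must cancel. Using the reparameterization in \cref{eq:reparameterization of bridge} together with $\rvz_T = \rvz^i$, I write $\rvz_t = a_t\rvz_0 + b_t\rvz^i + c_t\bm\eps$ with $\bm\eps\sim\mathcal{N}(\bm{0},\bm{I})$; then $\tilde\rvz_t = (\rvz_t - b_t\rvz^i)/\sqrt{a_t^2+c_t^2}$ drops the $b_t\rvz^i$ term and equals $\frac{a_t}{\sqrt{a_t^2+c_t^2}}\rvz_0 + \frac{c_t}{\sqrt{a_t^2+c_t^2}}\bm\eps$, that is, conditionally $\tilde\rvz_t\sim\mathcal{N}\big(\frac{a_t}{\sqrt{a_t^2+c_t^2}}\rvz_0,\,\frac{c_t^2}{a_t^2+c_t^2}\bm{I}\big)$, whose mean- and noise-coefficient squares sum to one, so $\tilde\rvz_t$ is exactly a VP-diffusion state started from $\rvz_0$. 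Next I pin down $\tilde t$: since $a_t^2/c_t^2 = \mathrm{SNR}_t - \mathrm{SNR}_T$ is strictly decreasing in $t$ with range $(0,\infty)$, and $\widetilde{SNR}$ is strictly decreasing onto $(0,\infty)$ for a standard pre-trained VP schedule, $\tilde t = \widetilde{SNR}^{-1}(a_t^2/c_t^2)$ exists, is unique, and $t\mapsto\tilde t$ is a bijection of $[0,T]$ (this is the same existence/uniqueness argument as in the discussion preceding the proposition). Crucially, because the pre-trained diffusion is variance preserving, $\tilde\alpha_{\tilde t}^2 + \tilde\sigma_{\tilde t}^2 = 1$; combining this with $\tilde\alpha_{\tilde t}^2/\tilde\sigma_{\tilde t}^2 = \widetilde{SNR}(\tilde t) = a_t^2/c_t^2$ and taking nonnegative roots forces $\tilde\alpha_{\tilde t} = a_t/\sqrt{a_t^2+c_t^2}$ and $\tilde\sigma_{\tilde t} = c_t/\sqrt{a_t^2+c_t^2}$.

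To finish, note that conditionally on $\rvz_0$ the pre-trained network at timestep $\tilde t$ sees $\mathcal{N}(\tilde\alpha_{\tilde t}\rvz_0,\tilde\sigma_{\tilde t}^2\bm{I})$, which by the previous step is precisely the conditional law of $\tilde\rvz_t$; since the same dataset $p_{data}(\rvz_0,z^i,c)$ is used for both training and fine-tuning, integrating this identity over $\rvz_0\sim p_{data}(\cdot\mid c)$ and over the bijectively transported timestep yields, for each $c$, the equality of the marginal law of $(\tilde\rvz_t,\tilde t)$ and that of $(\rvz_{\tilde t},\tilde t)$. I do not expect a genuinely hard step; the points needing care are (i) the cancellation that removes $\rvz^i$ from $\tilde\rvz_t$, (ii) that SNR matching alone is insufficient — it is the variance-preserving constraint that upgrades the ratio identity to a match of $\tilde\alpha_{\tilde t}$ and $\tilde\sigma_{\tilde t}$ individually — and (iii) the monotonicity and range facts behind the well-definedness of $\tilde t$.
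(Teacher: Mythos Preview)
Your proposal is correct and follows essentially the same route as the paper: compute the conditional law of $\tilde\rvz_t$ given $(\rvz_0, z^i)$ as an isotropic Gaussian depending only on $\rvz_0$, identify its coefficients with $(\tilde\alpha_{\tilde t},\tilde\sigma_{\tilde t})$ via the SNR match together with the variance-preserving constraint, and then marginalize over $p_{data}(\rvz_0\mid c)$. Your explicit emphasis in point (ii)---that $\tilde\alpha_{\tilde t}^2+\tilde\sigma_{\tilde t}^2=1$ is what upgrades the ratio identity $\tilde\alpha_{\tilde t}^2/\tilde\sigma_{\tilde t}^2=a_t^2/c_t^2$ to equality of the individual coefficients---is a useful clarification that the paper's proof uses but leaves implicit.
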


\begin{proof}
    Since $\widetilde{SNR}$ is also a monotonically decreasing function ranging over $(0, \infty)$, the uniqueness and existence of $\tilde{t}$ can also be guaranteed by the above analysis.
    
    For a fixed $c, t$, we denote the probability density function of $\tilde{\rvz}_t$ by $q(\tilde{\rvz}_t; t)$.
    Then
    \begin{equation}
    \begin{aligned}
        q(\tilde{\rvz}_t; t) &= \int_{z^i} q(\tilde{\rvz}_t | z^i; t) p_{data}(z^i) \mathrm{d} z^i \\
        &= \int_{z^i} \int_{\rvz_0} q(\tilde{\rvz}_t | \rvz_0, z^i; t) p_{data}(\rvz_0, z^i) \mathrm{d} \rvz_0 \mathrm{d} z^i \\
        &= \int_{z^i} \int_{\rvz_0} \mathcal{N}(\tilde{\rvz}_t; \frac{a_t}{\sqrt{a_t^2 + c_t^2}}\rvz_0, \frac{c_t}{\sqrt{a_t^2 + c_t^2}} I) p_{data}(\rvz_0, z^i) \mathrm{d} \rvz_0 \mathrm{d} z^i \\
        &= \int_{z^i} \int_{\rvz_0} \mathcal{N}(\tilde{\rvz}_t; \tilde{\alpha}_{\tilde{t}} \rvz_0, \tilde{\sigma}_{\tilde{t}}^2 I) p_{data}(\rvz_0, z^i) \mathrm{d} \rvz_0 \mathrm{d} z^i\\
        &= \int_{\rvz_0} \mathcal{N}(\tilde{\rvz}_t; \tilde{\alpha}_{\tilde{t}} \rvz_0, \tilde{\sigma}_{\tilde{t}}^2 I) (\int_{z^i} p_{data}(\rvz_0, z^i) \mathrm{d}z^i) \mathrm{d} \rvz_0 \\
        &= \int_{\rvz_0} \mathcal{N}(\tilde{\rvz}_t; \tilde{\alpha}_{\tilde{t}} \rvz_0, \tilde{\sigma}_{\tilde{t}}^2 I) p_{data}(\rvz_0) \mathrm{d} \rvz_0,
    \end{aligned}
    \end{equation}
    which equals to the marginal distribution of the pre-trained diffusion process $p_{t, diff}(\rvz_t)$.

\end{proof}

\paragraph{Output Parameterization} Our previous descriptions show how to align the input of the network when fine-tuning from T2V diffusion models to I2V bridge models. When the output parameterization of teacher diffusion models deviates significantly from the bridge parameterization $\bm\eps^{\hat\Psi}$, we can also reparameterize the network output to achieve better alignment. We take CogVideoX-2B as an example, where v-prediction is used for teacher diffusion models. The teacher diffusion models predict $\alpha_{\tilde{t}} \epsilon - \sigma_{\tilde{t}} \rvz_0$ from $(\alpha_{\tilde{t}} \rvz_0 + \sigma_{\tilde{t}} \epsilon, \tilde{t})$. After the input alignment of bridge schedule, we have
\begin{equation}
\begin{aligned}
    &\tilde{\rvz}_t = \frac{a_t}{\sqrt{a_t^2 + c_t^2}} \rvz_0 +  \frac{c_t}{\sqrt{a_t^2 + c_t^2}} \epsilon, \\
    & \alpha_{\tilde{t}} = \frac{a_t}{\sqrt{a_t^2 + c_t^2}}, \quad \sigma_{\tilde{t}} = \frac{c_t}{\sqrt{a_t^2 + c_t^2}}.
\end{aligned}
\end{equation}
To align the network output with the teacher, we can set the target of prediction as $\frac{a_t}{\sqrt{a_t^2 + c_t^2}} \epsilon - \frac{c_t}{\sqrt{a_t^2 + c_t^2}} \rvz_0$.

\subsection{Neural Prior with Regression Training Objective.}

\begin{prop}
    If we train $F_\eta(z^i, c)$ with the regression training objective
    \begin{equation}
    \label{eq:regression objective}
    \mathcal{L}_{p}(\eta) = \E_{(\rvz_0, z^i, c) \sim p_{data}(\rvz_0, z^i, c)} \left[ \norm{F_\eta(z^i, c) - \rvz_0}^2 \right],        
    \end{equation}
    and the neural network is optimized sufficiently, then we have 
    \begin{equation}
    F_\eta(z^i, c) = F_\eta^*(z^i, c) \triangleq \E_{\rvz_0 \sim p_{data}(\rvz_0 | z^i, c)}\left[ \rvz_0 \right].
    \end{equation}
\end{prop}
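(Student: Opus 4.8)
The plan is to invoke the classical fact that the $L^2$-optimal predictor of a random variable, given some conditioning information, is its conditional expectation, and to apply it pointwise in $(z^i, c)$. First I would use the tower property to rewrite the objective as an outer expectation over $(z^i, c) \sim p_{data}(z^i, c)$ of an inner conditional expectation over $\rvz_0 \sim p_{data}(\rvz_0 \mid z^i, c)$:
\[
\mathcal{L}_{p}(\eta) = \E_{(z^i, c)}\left[\E_{\rvz_0 \mid z^i, c}\left[\norm{F_\eta(z^i, c) - \rvz_0}^2\right]\right].
\]
Since $F_\eta(z^i, c)$ is a deterministic function of the conditioning variables, for each fixed $(z^i, c)$ the inner term depends on $\eta$ only through the value $y \triangleq F_\eta(z^i, c)$, so it suffices to minimize $y \mapsto \E_{\rvz_0 \mid z^i, c}[\norm{y - \rvz_0}^2]$ for every such pair.

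Next I would carry out the standard bias--variance decomposition of this inner expectation. Writing $\bar{\rvz}(z^i, c) \triangleq \E_{\rvz_0 \mid z^i, c}[\rvz_0]$ and adding and subtracting it inside the norm,
\[
\E_{\rvz_0 \mid z^i, c}\left[\norm{y - \rvz_0}^2\right] = \norm{y - \bar{\rvz}(z^i, c)}^2 + \E_{\rvz_0 \mid z^i, c}\left[\norm{\rvz_0 - \bar{\rvz}(z^i, c)}^2\right],
\]
where the cross term $2\langle y - \bar{\rvz}(z^i, c),\, \bar{\rvz}(z^i, c) - \E_{\rvz_0 \mid z^i, c}[\rvz_0]\rangle$ vanishes because $\E_{\rvz_0 \mid z^i, c}[\rvz_0 - \bar{\rvz}(z^i, c)] = 0$. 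The second term on the right is the conditional variance and is independent of $y$; the first is a nonnegative quadratic in $y$ that equals zero exactly when $y = \bar{\rvz}(z^i, c)$. Hence the inner objective is minimized, and uniquely so, at $y = \bar{\rvz}(z^i, c)$.

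Finally I would lift this pointwise statement back to the full objective: because the inner expectation is minimized for $p_{data}$-almost every $(z^i, c)$ by the single choice $F_\eta(z^i, c) = \bar{\rvz}(z^i, c)$, the interpretation of ``optimized sufficiently'' --- namely that the network attains a global minimizer of $\mathcal{L}_p$ within a function class rich enough to represent the conditional-mean map $(z^i, c) \mapsto \bar{\rvz}(z^i, c)$ --- forces $F_\eta(z^i, c) = \bar{\rvz}(z^i, c) = F_\eta^*(z^i, c)$. The point that needs care, more than a genuine obstacle, is exactly this hypothesis: it bundles together universal approximation of the target map and attainment of the global optimum, and if either fails one can only conclude that $F_\eta$ equals the $L^2$ projection of $\bar{\rvz}$ onto the realizable class. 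A minor technical caveat is that the decomposition presumes finiteness of the conditional second moment of $\rvz_0$, and that all equalities hold only up to $p_{data}$-null sets in $(z^i, c)$.
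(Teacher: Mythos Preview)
Your proposal is correct and follows essentially the same approach as the paper: reduce the objective to a pointwise minimization over $(z^i,c)$ via the tower property, then use the standard quadratic argument to identify the minimizer as the conditional mean. The paper expands $\norm{F_\eta(z^i,c)-\rvz_0}^2$ directly rather than inserting $\bar{\rvz}(z^i,c)$ for a bias--variance split, but this is a cosmetic difference; your version is arguably cleaner and your explicit remarks about the ``optimized sufficiently'' hypothesis and the almost-everywhere caveat are more careful than the paper's treatment.
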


\begin{proof}
    For each $(z^i, c)$, $\mathcal{L}_{p}(\eta)$ optimizes the following objective:
    \begin{equation}
    \begin{aligned}
        l_\eta(z^i, c) &= \E_{\rvz_0 \sim p_{data}(\rvz_0 | z^i, c)}\left[ \norm{F_\eta(z^i, c) - \rvz_0}^2\right]\\
        &= \norm{F_\eta(z^i, c)}^2 - \langle F_\eta(z^i, c), \E_{\rvz_0 \sim p_{data}(\rvz_0 | z^i, c)}\left[ \rvz_0 \right] \rangle + \norm{\E_{\rvz_0 \sim p_{data}(\rvz_0 | z^i, c)}\left[ \rvz_0 \right]}^2 \\
        &= \norm{F_\eta(z^i, c)}^2 - \langle F_\eta(z^i, c), \E_{\rvz_0 \sim p_{data}(\rvz_0 | z^i, c)}\left[ \rvz_0 \right] \rangle + C.
    \end{aligned}
    \end{equation}
    where $C$ is a constant independent of $\eta$.
    When the network is optimized sufficiently, $l_\eta(z^i, c)$ takes the minimum for each $(z^i, c)$, so we have
    \begin{equation}
        F_\eta(z^i, c) = \arg\min_\rvx \left( \norm{\rvx}^2 - \langle \rvx, \E_{\rvz_0 \sim p_{data}(\rvz_0 | z^i, c)}\left[ \rvz_0 \right] \rangle \right)
    \end{equation}
    It can be solved that $F_\eta(z^i, c) = \E_{\rvz_0 \sim p_{data}(\rvz_0 | z^i, c)}\left[ \rvz_0 \right]$.
\end{proof}

\section{Pseudo Code for the Training and Sampling of FrameBridge}
\label{appendix:pseudo code}
We provide the pseudo code for the training and sampling process of FrameBridge (See Algorithm \ref{alg:train FrameBridge} and \ref{alg:sample bridge}). Meanwhile, we also provide that of diffusion-based I2V models (See Algorithm \ref{alg:train diffusion} and \ref{alg:sample diffusion}) to show the distinctions between FrameBridge and diffusion-based I2V models.

\begin{algorithm}[tb]
   \caption{Training algorithms for I2V diffusion models.}
   \label{alg:train diffusion}
\begin{algorithmic}
   \STATE {\bfseries Output:} Trained I2V diffusion model $\bm\eps_{\theta} (\rvz_t, t, z^i, c)$.
   \STATE Set diffusion process $\{\alpha_t, \sigma_t\}_{t=0}^T$.
    \IF {Fine-tuned from pre-trained diffuion model $\bm\eps_{\phi}(\rvz_t, t, c)$}
        \STATE Initialize $\bm\eps_{\theta}$ with the weight of $\bm\eps_{\phi} (\rvz_t, t, c)$.
    \ELSE \STATE Randomly initialize $\bm\eps_{\theta} (\rvz_t, t, z^i, c)$. 
    \ENDIF
   \REPEAT
       \STATE Sample data $(\rvz_0, c) \sim p_{data}(\rvz_0, c)$, timestep $t$ and $\rvz_t = \alpha_t \rvz_0 + \sigma_t \bm\eps$, where $\bm\eps \sim \mathcal{N}(0, I)$.
       \STATE Take the first frame of $\rvz_0$ as the image condition $z^i$.
      \STATE $l(\theta) = \norm{\bm\eps_{\theta}(\rvz_t, t, z^i, c) - \bm\eps}^2$.
      \STATE Update $\theta$ with the optimizer and loss function $l(\theta)$
   \UNTIL{Reach the training budget}
\end{algorithmic}
\end{algorithm}

\begin{algorithm}[tb]
    \caption{Sampling algorithms for FrameBridge.}
\label{alg:sample bridge}
\begin{algorithmic}
   \STATE {\bfseries Output:} Video latent $\rvz_0$.
   \STATE Prepare a trained FrameBridge model $\bm\eps_\theta^{\hat\Psi}(\rvz_t, t, \rvz_T, z^i, c)$ and timestep schedule $0 = t_0 < t_1 < ... < t_N = T$.
   \STATE Obtain the given input image $z^i$ and additional conditions $c$.

    \IF{Neural prior is used}
        \STATE $\rvz_T \gets F_\eta(z^i, c)$. Here $F_\eta$ should be the same neural prior model used in the training process.)
    \ELSE
        \STATE Construct $\rvz_T$ by replicating $z^i$.
    \ENDIF
    \FOR{$k=N$ downto $1$}
    \STATE Calculate the score function of bridge process $\nabla_{\rvz} \log p_{bridge, t_k}(\rvz | \rvz_T, z^i, c) |_{\rvz = \rvz_{t_k}}$ with $\bm\eps_\theta^{\hat\Psi}(\rvz_{t_k}, t_k, \rvz_T, z^i, c)$.
    \STATE Utilize a SDE solver to solve the backward bridge SDE $\mathrm{d} \rvz_t = \left[\vf(t) \rvz_t - g(t)^2 (\vs(\rvz_t, t, \rvz_T, z^i, c) - \vh(\rvz_t, t, \rvz_T, z^i, c))\right]\mathrm{d}t + g(t) \mathrm{d} \bar{\vw}$ from $\rvz(t_k) = \rvz_{t_k}$ to obtain $\rvz_{t_{k - 1}}$.
    \ENDFOR
    \STATE Return $\rvz_0$.
\end{algorithmic}
\end{algorithm}

\begin{algorithm}[tb]
    \caption{Sampling algorithms for I2V diffusion models.}
\label{alg:sample diffusion}
\begin{algorithmic}
   \STATE {\bfseries Output:} Video latent $\rvz_0$.
   \STATE Prepare a trained I2V diffusion model $\bm\eps_\theta(\rvz_t, t, z^i, c)$ and timestep schedule $0 = t_0 < t_1 < ... < t_N = T$.
    \STATE Obtain the given input image $z^i$ and additional conditions $c$.
    \STATE Sample a latent $\rvz_T \sim \mathcal{N} (0, \sigma_T^2 I)$.
    
    \FOR{$k=N$ downto $1$}
    \STATE Calculate the score function of diffusion process $\nabla_{\rvz} \log p_{diff, t_k}(\rvz | z^i, c) |_{\rvz = \rvz_{t_k}}$ with $\bm\eps(\rvz_{t_k}, t_k, z^i, c)$.
    \STATE Utilize a SDE solver to solve the backward diffusion SDE $\mathrm{d} \rvz_t = \left[\vf(t)\rvz_t - g(t)^2 \nabla_{\rvz_t} \log p_{t, diff}(\rvz_t | z^{i}, c)\right]\mathrm{d}t + g(t) \mathrm{d} \bar{\mathbf{w}}$ from $\rvz(t_k) = \rvz_{t_k}$ to obtain $\rvz_{t_{k - 1}}$.
    \ENDFOR
    \STATE Return $\rvz_0$.
\end{algorithmic}
\end{algorithm}

\section{Detailed Analysis of I2V Generation Performance}

In this section, we provide further discussions and analysis of the results provided in Section \ref{section:experiments}.

\subsection{Dynamic Degree of Generated Videos}
\label{appendix:discussion-dynamic}
As shown by \citep{zhao2024identifying}, there is usually a trade-off between dynmaic motion and condition alignment for I2V models, and the high dynamic degree scores of some baseline models in Table \ref{table: vbench-i2v} are at the cost of condition and temporal consistency. FrameBridge can reach a balance demonstrated by the multi-dimensional evaluation on VBench-I2V. Table \ref{table: vbench-i2v-motion} shows VBench-I2V scores related to dynamic degree and temporal consistency. 

\begin{table}[t]
\caption{VBench-I2V scores related to the motion of videos for different I2V models. For all the evaluation dimensions, higher score means better performance. For results marked by $^{*}$, we directly use the data of VBench-I2V Leaderboard.}
\label{table: vbench-i2v-motion}
\begin{center}
\begin{tabular}{llll}
\toprule
Model & \footnotesize \makecell[l]{\textbf{Dynamic} \\ \textbf{Degree}} & \footnotesize \makecell[l]{\textbf{Temporal} \\\textbf{Flickering}} &\footnotesize  \makecell[l]{\textbf{Motion} \\\textbf{Smoothness}}  \\
\midrule
FrameBridge-VideoCrafter &  \underline{35.77} &  \textbf{98.01} & \textbf{98.51} \\
DynamiCrafter-256 & \textbf{38.69} & 97.03 & \underline{97.82} \\
SEINE-256 $\times$ 256 & 24.55 & 95.07 & 96.20 \\
SEINE-512 $\times$ 320$^{*}$ & 34.31 & 96.72 & 96.68 \\
SEINE-512 $\times$ 512$^{*}$ & 27.07 & \underline{97.31} & 97.12 \\
ConsistI2V$^{*}$ & 18.62 & 97.56 & 97.38 \\
\bottomrule
\end{tabular}
\end{center}
\end{table}

Meanwhile, some techniques are proposed for I2V diffusion models to improve the dynamic degree and we find they are also applicable to FrameBridge. To be more specific, we fine-tune FrameBridge-VideoCrafter by adding noise to the image condition \citep{blattmann2023stable, zhao2024identifying} and use higher value of frame-stride conditioning \citep{xing2023dynamicrafter} respectively, and conduct a user study to evaluate the dynamic degree and overall video quality. We randomly sample 50 prompts from VBench-I2V and generate one video with each prompt for each model. Participants are asked two questions for each group of videos:

\begin{itemize}
    \item Rank the videos according to the dynamic degree. Higher rank (i.e. lower ranking number) corresponds to higher dynamic degree.
    \item Rank the videos according to the overall quality. Higher rank (i.e. lower ranking number) corresponds to higher quality.
\end{itemize}

We recruited 18 participants and use Average User Ranking (AUR) as a preference metric (lower for better performance). The results are shown in Table \ref{table: vbench-i2v-motion-improve}.

\begin{table}[t]
\caption{Results of user study. All the models are fine-tuned from VideoCrafter1. For FrameBridge-FrameStride, we increase the value of conditioning frame stride (from 3 to 5) when sampling. For FrameBridge-NoisyCondition, we add noise to the image condition in the fine-tuning process.}
\label{table: vbench-i2v-motion-improve}
\begin{center}
\begin{tabular}{llll}
\toprule
Model & AUR of dynamic degree $\downarrow$ & AUR of overall quality $\downarrow$  \\
\midrule
DynamiCrafter & 2.85 & 3.04 \\
FrameBridge & 2.74 & \textbf{2.26} \\
FrameBridge-FrameStride & \textbf{2.12} & 2.34 \\
FrameBridge-NoisyCondition & 2.29 & 2.35 \\
\bottomrule
\end{tabular}
\end{center}
\end{table}

\subsection{Content-Debiased FVD}
\citet{ge2024content} points out that the FVD metric has a content bias and may misjudge the qualify of videos. As supplementary, we also provide the evaluation results of the Content-Debiased FVD (CD-FVD) on MSR-VTT in Table \ref{table: new-FVD}.

\begin{table}[t]
\caption{Zero-shot CD-FVD metric on MSR-VTT dataset. We also include the FVD metric as a reference}
\label{table: new-FVD}
\begin{center}
\begin{tabular}{lll}
\toprule
Model & CD-FVD $\downarrow$ & FVD $\downarrow$ \\
\midrule
DynamiCrafter & 207 & 234 \\
SEINE & 420 & 245\\
ConsistI2V &192 & 106 \\
SparseCtrl & 454& 311 \\
FrameBridge-VideoCrafter & \textbf{148} & \textbf{95} \\
\bottomrule
\end{tabular}
\end{center}
\end{table}

\subsection{Learning Curve of Video Quality}

To illustrate the change of video quality during training, we reproduce the training process of DynamiCrafter for 20k iterations and compare the zero-shot CD-FVD metric on MSR-VTT dataset with a FrameBridge model trained during the training process. As we use the same training batch size and model structure for FrameBridge and DynamiCrafter in this experiment, the training budget for two models at the same training step is also the same. As demonstrated by Figure \ref{figure:learning curve}, the video quality of FrameBridge is superior to that of DynamiCrafter during the training process and it also converges faster than its diffusion counterpart (\ie, DynamiCrafter).

\begin{figure}[h]
    \centering
    \includesvg[inkscapelatex=false,width=0.6\textwidth]{Figures/learning_curves.svg}
\caption{The learning curve of FrameBridge and DynamiCrafter.}
\label{figure:learning curve}
\end{figure}

\subsection{Sampling Efficiency of FrameBridge}

Since sampling efficiency is also important for I2V models, we also conduct experiments to show the quality of videos sampled with different number of sampling timesteps and compare it with DynamiCrafter and SEINE. Figure \ref{figure: timestep} shows that the quality of videos sampled by FrameBridge is better than that of DynamiCrafter and SEINE with different timesteps (\ie, 250, 100, 50, 40, 20). Moreover, we also measure the actual execution time of the sampling algorithm and show the result in Figure \ref{figure: runtime}. As illustrated by these two figures, FrameBridge can achieve good balance between sample efficiency and video quality, and there is no significant degradation in video quality when decreasing the sampling timestep from 250 to 50 or even smaller.

\begin{figure}[h]
\label{figure: timestep}
\centering
\begin{minipage}[t]{0.5\linewidth}
    \centering
    \includesvg[inkscapelatex=false,width=\textwidth]{Figures/fvd_samplestep.svg}
    \centerline{(a) Zero-shot FVD with different sampling timesteps}
\end{minipage}%
\begin{minipage}[t]{0.5\linewidth}
    \centering
    \includesvg[inkscapelatex=false,width=\textwidth]{Figures/pic_samplestep.svg}
    \centerline{(b) Zero-shot PIC with different sampling timesteps}
\end{minipage}
\caption{Video quality sampled with different number of timesteps.}
\end{figure}

\begin{figure}[h]
\label{figure: runtime}
\centering
\begin{minipage}[t]{0.5\linewidth}
    \centering
    \includesvg[inkscapelatex=false,width=\textwidth]{Figures/fvd_samplestime.svg}
    \centerline{(a) Zero-shot FVD with different execution time}
\end{minipage}%
\begin{minipage}[t]{0.5\linewidth}
    \centering
    \includesvg[inkscapelatex=false,width=\textwidth]{Figures/pic_samplestime.svg}
    \centerline{(b) Zero-shot PIC with different execution time}
\end{minipage}
\caption{Video quality sampled with different execution time.}
\end{figure}

\subsection{SNR-Aligned Fine-tuning on WebVid-2M}
\label{appendix: saf-webvid}
To ablate SAF technique on WebVid-2M, we fine-tune FrameBridge models from VideoCrafter1 with the same configuration except the usage of SAF for 1.6k steps. The zero-shot metrics are reported in Table \ref{table: saf-videocrafter}. Similar ablation is conducted with FrameBridge models fine-tuned from CogVideoX-2B for 5k steps, and the zero-shot metrics are reported in Table \ref{table: saf-cogvideox}.

\begin{table*}[t]
\caption{Zero-shot metrics on UCF-101 and MSR-VTT for FrameBridge-VideoCrafter models.}
\label{table: saf-videocrafter}
\begin{center}
\begin{tabular}{lllllll}
\toprule
\multirow{2}{*}{Model} & \multicolumn{3}{c}{\textbf{UCF-101}} & \multicolumn{3}{c}{\textbf{MSR-VTT}} \\
\cmidrule(r){2-4} \cmidrule(r){5-7}
& FVD $\downarrow$ & IS $\uparrow$ & PIC $\uparrow$ & FVD $\downarrow$ & CLIPSIM $\uparrow$ & PIC $\uparrow$ \\
\midrule
FrameBridge-VideoCrafter (w/o SAF) & 431 & 45.88 & 0.6765 & 151 & 0.2248 & 0.6493 \\
FrameBridge-VideoCrafter (w/SAF) & 354 & 46.09 & 0.7060 & 132 & 0.2248 & 0.6778 \\
\bottomrule
\end{tabular}
\end{center}
\end{table*}

\begin{table*}[t]
\caption{Zero-shot metrics on UCF-101 and MSR-VTT for FrameBridge-CogVideoX models.}
\label{table: saf-cogvideox}
\begin{center}
\begin{tabular}{lllllll}
\toprule
\multirow{2}{*}{Model} & \multicolumn{3}{c}{\textbf{UCF-101}} & \multicolumn{3}{c}{\textbf{MSR-VTT}} \\
\cmidrule(r){2-4} \cmidrule(r){5-7}
& FVD $\downarrow$ & IS $\uparrow$ & PIC $\uparrow$ & FVD $\downarrow$ & CLIPSIM $\uparrow$ & PIC $\uparrow$ \\
\midrule
FrameBridge-CogVideoX (w/o SAF) & 359 & 36.84 & 0.5868 & 209 & 0.2250 & 0.6056 \\
FrameBridge-CogVideoX (w/SAF) & 347 & 41.12 & 0.6563 & 185 & 0.2250 & 0.6587 \\
\bottomrule
\end{tabular}
\end{center}
\end{table*}

\section{Experiment Details}
\label{appendix:experiment details}
We provide descriptions of the datasets and metrics used in our experiments, along with implementation details for different I2V models.
\subsection{Datasets}
\textbf{UCF-101} is an open-sourced video dataset consisting of 13320 videos clips, and each video clip are categorized into one of the 101 action classes. There are three official train-test split, each of which divide the whole dataset into 9537 training video clips and 3783 test video clips. We use the whole dataset as the training data for I2V models trained from scratch on UCF-101, and use the test set to evaluate zero-shot metrics for models fine-tuned on WebVid-2M. When we evaluate zero-shot metrics on UCF-101 for text-conditional I2V models, we use the class label as the input text prompt.

\textbf{WebVid-2M} is an open-sourced dataset consisting of about 2.5 million video-text pairs, which is a subset of WebVid-10M. We only use WebVid-2M as the training data when fine-tuning I2V models from T2V diffusions in Section \ref{section:exp fine-tuning}.

\textbf{MSR-VTT} is an open-sourced dataset consisting of 10000 video-text pairs, and we only use the test set to compute zero-shot metrics for fine-tuned models.

\paragraph{Preprocess of Training Data:} For both UCF-101 and WebVid-2M dataset, we sample 16 frames from each video clip with a fixed frame stride of 3 when training. Then we resize and center-crop the video clips to 256 $\times$ 256 before input it to the models.

\subsection{Metrics}

\textbf{Fr\'{e}chet Video Distance ( \citet{unterthiner2018towards}; FVD)} evaluates the quality of synthesized videos by computing the perceptual distance between videos sampled from the dataset and the models. We follow the protocol used in StyleGAN-V \citep{skorokhodov2022stylegan} to calculate FVD. First, we sample 2048 video clips with 16 frames and frame stride of 3 from the dataset. Then, we generate 2048 videos from the I2V models. All videos are resized to 256 $\times$ 256 before calculating FVD except for ExtDM. (ExtDM generate videos with resolution 64 $\times$ 64, so we compute FVD on this resolution.) After that, we extract features of those videos with the same I3D model used in the repository of StyleGAN-V \footnote{https://github.com/universome/stylegan-v} and calculate the Fr\'{e}chet Distance.

\textbf{Inception Score (\citet{saito2017temporal}; IS)} also evaluates the quality of the generated videos. However, computing IS need a pre-trained classifier and we only apply this metric on UCF-101. When computing IS, we use the open-sourced evaluation code and pre-trained classifier for videos from the repository of StyleGAN-V.

\textbf{CLIPSIM \citep{wu2021godiva}} evaluates the consistency between video frames and the text prompt by computing the average CLIP similarity score between each frame and the prompt. We use the VIT-B/32 CLIP model \citep{radford2021learning} when evaluating zero-shot metrics on MSR-VTT.

\textbf{PIC} is a metric used by \citet{xing2023dynamicrafter} to evaluate the consistency of video frames and the given image by the computing average Dreamsim \citep{fu2023dreamsim} distance between generated frames and the image condition.

\subsection{Implementation of FrameBridge and Other Baselines}
We offer the implementation details of I2V models which are fine-tuned on WebVid-2M or trained from scratch on UCF-101.
\subsubsection{FrameBridge}
\paragraph{Fine-tuning on WebVid2M} For FrameBridge-VideoCrafter, we refer to  the codebase of Dynamicrafter\footnote{https://github.com/Doubiiu/DynamiCrafter} to fine-tune FrameBridge, and initialize our model from the pre-trained VideoCrafter1 \citep{chen2023videocrafter1} checkpoint. For FrameBridge-CogVideoX, we refer to the official codebase \footnote{https://github.com/THUDM/CogVideo} and initialize our model from the pre-trained CogVideoX-2B \citep{yang2024cogvideox} checkpoint.
For the schedule of bridge, we adopt the Bridge-gmax schedule of \citep{chen2023schrodinger}, where $f(t) = 0$, $g(t)^2 = \beta_0 + t (\beta_1 - \beta_0)$, $\alpha_t = 1$, $\sigma_t^2 = \frac{1}{2}(\beta_1 - \beta_0)t^2 + \beta_0 t$ with $\beta_0 = 0.01$, $\beta_1 = 50$. We fine-tune the models $\bm\eps^{\hat\Psi}$ for 20k iterations or 100k iterations with batch size 64. We use the AdamW optimizer with learning rate $1 \times 10^{-5}$ and mixed precision of BFloat16. We do not apply ema to the model weight during fine-tuning. The conditions $c$ and $z^i$ are incorporated into the network in the same way as DynamiCrafter, and we concatenate $\rvz_t$ with $\rvz^i$ along the channel or temporal axis to condition the network on the prior (we find that the performance is quite similar whether we conduct the concatenation along channel or temporal axis). As the schedule $\{\alpha_t, \sigma_t \}_{t \in [0, T]}$ is different from that of the pre-trained diffusion models, we use the generalized SAF (Proposition \ref{prop:SAF}).

\paragraph{Training From Scratch on UCF-101} We reference the codebase of Latte\footnote{https://github.com/Vchitect/Latte} to train FrameBridge from scratch on UCF-101. We adopt Latte-S/2 as our bridge model with the same schedule as above and train FrameBridge for 400k iterations with batch size 40. For FrameBridge with neural prior, we also implement $F_\eta(z^i, c)$ with Latte-S/2 except that the conditioning of timestep $t$ is removed from the model. To match $z^i$ with the input shape of Latte, we replicate $z^i$ for $L$ times and concatenate them along temporal axis. We train $F_\eta(z^i, c)$ for 400k iterations with batch size 32 before training bridge models if the neural prior is applied. For both the training of bridge models and $F_\eta(z^i, c)$, we use the AdamW optimizer with learning rate $1 \times 10^{-5}$ and ema is not applied. The conditions $c$ are incorporated into the network in the same way as Latte. Since Latte-S/2 is a transformer-based diffusion network, we incorporate the condition $z^i$ by concatenate it  with video latent $\rvz_t$ in the token sequence. To condition the network on prior $\rvz^i$ or $F_\eta(z^i, c)$, we concatenate them with $\rvz_t$ along the channel axis.

\paragraph{SNR-Aligned Fine-tuning} When implementing the SAF technique, we need to calculate the inverse function of SNR for the teacher diffusion schedule $\tilde{t} = SNR^{-1}(\frac{a_t^2}{c_t^2})$. However, some T2V diffusion models use discrete timesteps and we need to approximate the aligned $\tilde{t}$. In our experiments, we choose to find a discrete timestep $t_n$ such that $SNR(t_n) > \frac{a_t^2}{c_t^2} > SNR(t_{n + 1})$ and assume $SNR(\cdot)$ is a linear function with respect to the input $t$ of diffusion schedule in the interval $(t_n, t_{n + 1})$ to obtain the aligned $\tilde{t}$.

\subsubsection{Baselines for text-conditional I2V generation}

For SVD \citep{blattmann2023stable}, SEINE \citep{chen2023seine}, ConsistI2V \citep{ren2024consisti2v} and SparseCtrl \citep{guo2025sparsectrl}, we use the official model checkpoints and sampling code to sample videos for evaluation. For DynamiCrafter \citep{xing2023dynamicrafter}, we sample videos with the official model checkpoints. We also use the official training code \footnote{https://github.com/Doubiiu/DynamiCrafter} to train a DynamiCrafter for 20k iterations with batch size of 64 as a diffusion-based I2V fine-tuning baseline to compare it with FrameBridge fine-tuned under the same training budget.

\subsubsection{Baselines for class-conditional I2V generation}

\textbf{ExtDM \citep{zhang2024extdm}} is a diffusion-based video prediction model, which is trained to predict the following $m$ frames with the given first $n$ frames of a video clip. We train ExtDM with their official implementation\footnote{https://github.com/nku-zhichengzhang/ExtDM} and set $n = 1, m = 15$ for our I2V setting on UCF-101.

\textbf{VDT-I2V} is our implementation of the I2V method proposed by \citet{lu2023vdt}. They use a transformer-based diffusion network for I2V generation by directly concatenating the image condition with the token sequence of the noisy video latent $\rvz_t$. We also implement their I2V method on a Latte-S/2 model considering the similarities among transformer-based diffusion models.

\subsubsection{Ablation Studies on Neural Prior}

In Section \ref{section:exp ablation}, we ablate on the neural prior technique by comparing the performance of four models:

\begin{itemize}
    \item {\textbf{VDT-I2V}: The same model as our diffusion baseline on UCF-101.}
    \item {\textbf{VDT-I2V with neural prior as the network condition}: The same model as VDT-I2V except that we additionally condition the network on $F_\eta(z^i, c)$.}
    \item {\textbf{FrameBridge without neural prior}: A FrameBridge model implemented by utilizing the replicated image $\rvz^i$ as the prior.}
    \item {\textbf{FrameBridge with neural prior only as the network condition}: A FrameBridge model implemented by utilizing $\rvz^i$ as the prior. However, we condition the bridge model on $F_\eta(z^i, c)$ by additionally feeding it into the network through concatenation with $\rvz_t$ along the channel axis.}
    \item {\textbf{FrameBridge-NP}: A FrameBridge model implemented by utilizing $F_\eta(z^i, c)$ as the prior.}
\end{itemize}

\begin{algorithm}[tb]
   \caption{Training algorithms for FrameBridge.}
   \label{alg:train FrameBridge}
\begin{algorithmic}
   \STATE {\bfseries Output:} Trained FrameBridge model $\bm\eps_{\theta}^{\hat\Psi} (\rvz_t, t, \rvz_T, z^i, c)$.
\STATE Set bridge process $\{\alpha_t, \sigma_t, a_t, b_t, c_t\}_{t=0}^T$.
\IF{Neural prior is used}
    \STATE Train a neural prior model $F_\eta(z^i, c)$ with \cref{eq:RP training objective} before training FrameBridge.
\ENDIF
\IF{Fine-tuned from pre-trained diffuion model $\bm\eps_\phi (\rvz_t, t, c)$}
        \IF{SAF is used}
            \STATE Re-parameterize the input of $\bm\eps_{\theta}^{\hat\Psi} (\rvz_t, t, \rvz_T, z^i, c)$ by
            $\bm\eps_{\theta}^{\hat\Psi} (\rvz_t, t, \rvz_T, z^i, c) \triangleq \bm\eps_{\theta, align}^{\hat\Psi}(\tilde{\rvz}_t, \tilde{t}, \rvz_T, z^i, c)$ with \cref{eq:SAF}.
            \STATE Initialize $\bm\eps_{\theta, align}^{\hat\Psi}$ with the weight of $\bm\eps_\phi (\rvz_t, t, c)$.
        \ELSE 
            \STATE Initialize $\bm\eps_{\theta}^{\hat\Psi}$ with the weight of $\bm\eps_\phi (\rvz_t, t, c)$.
        \ENDIF
\ELSE
    \STATE Randomly initialize $\bm\eps_{\theta}^{\hat\Psi} (\rvz_t, t, \rvz_T, z^i, c)$.
\ENDIF

   \REPEAT
        \STATE Sample data $(\rvz_0, c) \sim p_{data}(\rvz_0, c)$, timestep $t$ and $\rvz_t \sim p_{bridge, t}(\rvz_t | \rvz_0, \rvz_T)$.
        \STATE Take the first frame of $\rvz_0$ as the image condition $z^i$.
        \IF {Neural prior is used}
            \STATE $\rvz_T \gets F_\eta(z^i, c)$.
        \ELSE
            \STATE Construct $\rvz_T$ by replicating $z^i$.
        \ENDIF
        \STATE $l(\theta) = \norm{\bm\eps_{\theta}^{\hat\Psi}(\rvz_t, t, \rvz_T, z^i, c) - \frac{\rvz_t - \alpha_t\rvz_0}{\sigma_t}}^2$.
        \STATE Update $\theta$ with the optimizer and loss function $l(\theta)$.
   \UNTIL{Reach the training budget}
\end{algorithmic}
\end{algorithm}

\section{Discussion On Related Works}


\paragraph{Video Diffusion Models} Inspired by the success of text-to-image (T2I) diffusion models~\citep{ramesh2022hierarchicaltextconditionalimagegeneration, nichol2021glide}, numerous studies have investigated diffusion-based text-to-video (T2V) models~\citep{blattmann2023stable, yang2024cogvideox, singer2022make} by designing 3D spatial-temporal U-Net \citep{ho2022video, ho2022imagen} and Diffusion Transformers (DiT)~\citep{peebles2023scalable, bao2023all, zhang2025sageattention2++}. To improve memory and computation efficiency, Latent Diffusion Models (LDM) \citep{rombach2022high, vahdat2021score} are utilized where the diffusion process is applied in the compressed latent space of video samples ~\citep{bao2024vidu, videoworldsimulators2024, he2022latent}. Meanwhile, some other works designed cascaded diffusion models to generate motion representation \citep{yu2024efficient} or videos with lower resolution \citep{ho2022imagen, wang2023lavie} first, which are utilized to synthesize the result videos in the subsequent stages. Another line of research \citep{zhang2025videoelevator, guo2023animatediff, wu2023tune} focuses on leveraging T2I diffusion models to enhance the performance of T2V generation, achieving high spatial quality and motion smoothness at the same time.

\paragraph{Diffusion-based I2V Generation} The main difference between I2V and T2V is the incorporation of image conditions into the sampling process. \citet{xing2023dynamicrafter} utilizes the features of a CLIP image encoder and a lightweight transformer to inject image conditions into the backbone of a T2V model. \citet{ma2024cinemo} and \citet{zhang2024trip} propose to directly model the residual between the subsequent frames and the given initial frame with diffusion for I2V generation. Moreover, \citet{ma2024cinemo} also uses the DCTInit technique to enhance the consistency of video content with the given image. \citet{chen2023seine} presents to train short-to-long video generation models with masked diffusion models. \citet{guo2023animatediff} and \citet{zhang2024pia} propose to utilize pre-trained T2I models for image animation by training an additional component to model the relationship between video frames. SparseCtrl \citep{guo2025sparsectrl} and Animate Anyone \citep{hu2024animate} design specific fusion modules for video diffusion models to adapt to various types of conditions including RGB images. \citet{ren2024consisti2v} propose improved network architecture and sampling strategy for image-to-video generation at the same time to enhance the controllability of image conditions. \citet{jain2024video}, \citet{zhang2023i2vgen} and \citet{shi2024motion} design cascaded diffusion systems for I2V generation. VIDIM \citep{jain2024video} consists of one base diffusion model and another two diffusion models for spatial and temporal super-resolution respectively. \citet{zhang2023i2vgen} uses a base diffusion model to generate videos with low resolutions, which serve as the input of the following video super-resolution diffusion model. \citet{shi2024motion} first generates the optical flow between the subsequent frames and given image with a diffusion process, and use the optical flow as conditions of another model to generate videos. \citet{ni2023conditional} and \citet{zhang2024extdm} train an autoencoder to represent the motions between frames in a latent space, and use diffusion models to generate motion latents. However, previous I2V diffusion models are built on the \textit{noise-to-data} generation of conditional diffusion process and the sampling remains a denoising process conditioned on given images. In contrast, FrameBridge replaces the diffusion process with a bridge process and the sampling directly model the animation of static images.

\paragraph{Noise Manipulation for Video Diffusion Models} Several works have explored to improve the uninformative prior distribution of diffusion models. PYoCo \citep{ge2023preserve} recently proposes to use correlated noise for each frame in both training and inference. ConsistI2V \citep{ren2024consisti2v}, FreeInit \citep{wu2023freeinit}, and CIL \citep{zhao2024identifying} present training-free strategies to better align the training and inference distribution of diffusion prior, which is popular in diffusion models \citep{lin2024common, podell2023sdxl, blattmann2023stable}. Noise Calibration \citep{yang2024noise} proposed to enhance the video quality of SDEdit \citep{meng2021sdedit} with iterative calibration of initial noise
These strategies focus on improving the noise distribution to enhance the quality of synthesized videos, while they still suffer the restriction of noise-to-data diffusion framework, which may limit their endeavor to utilize the entire information (\eg, both large-scale features and fine-grained details) contained in the given image. In contrast, we propose a \textit{data-to-data} framework and utilize deterministic prior rather than Gaussian noise, allowing us to leverage the clean input image as prior information.

\paragraph{Comparison with Previous Works of Bridge Models and Coupling Flow Matching} In Section \ref{section:methods}, we leverage the forward SDE of bridge models \citep{zhou2023denoising} and the backward sampler proposed by \citet{chen2023schrodinger} to build FrameBridge. We unify their theoretical frameworks to establish our formulation, and emphasize that bridge models are suitable for image-to-video generation, which is a typical \textit{data-to-data} generation task. \citet{liu20232} and \citet{chen2023schrodinger} apply bridge models to image-to-image translation and text-to-speech synthesis tasks respectively. Similar as bridge models, flow matching can also be used to construct the data-dependent stochastic interpolants \citep{albergo2024stochastic, fischer2023boosting, albergo2023stochastic2} for paired-data generation and has been used in image-to-image generation. However, whether the coupling flow matching is suitable for image-to-video generation has not been fully explored. Compared with their works, we focus on I2V tasks, building our bridge-based framework by utilizing the \textit{frames-to-frames} essence and presenting two innovative techniques for two scenarios of training I2V models, namely fine-tuning from pre-trained text-to-video diffusion models and training from scratch. 

\section{More Qualitative Results of FrameBridge}
We show several randomly selected samples of FrameBridge below, and more synthesized samples can be visited at: \url{https://framebridge-icml.github.io/}

\begin{figure}
    \centering
    \includesvg[inkscapelatex=false,width=0.8\textwidth]{Figures/Figure5_compare_submit}
\caption{\textbf{Another case of qualitative comparison between FrameBridge and other baselines.} FrameBridge outperforms diffusion baseline methods in appearance consistency and video quality. FrameBridge and DynamiCrafter models are fine-tuned from VideoCrafter1 for 20k steps.}
\end{figure}

\begin{figure}
    \centering
    \includesvg[inkscapelatex=false,width=1.0\textwidth]{Figures/appendix-qualitative2}
\caption{\textbf{Qualitative comparison between FrameBridge and other baselines.} Here FrameBridge is fine-tuned from CogVideoX-2B for 100k steps, and the samples of other baselines are generated with their official checkpoints. DynamiCrafter, SEINE, ConsistI2V are fine-tuned from VideoCrafter1, inflated Stable Diffusion 2.1-Base and LaVie respectively.}
\end{figure}

\begin{figure}
    \centering
    \includesvg[inkscapelatex=false,width=1.0\textwidth]{Figures/appendix-qualitative3}
\caption{\textbf{Qualitative comparison between FrameBridge and other baselines.} Here FrameBridge is fine-tuned from CogVideoX-2B for 100k steps, and the samples of other baselines are generated with their official checkpoints. DynamiCrafter, SEINE, ConsistI2V are fine-tuned from VideoCrafter1, inflated Stable Diffusion 2.1-Base and LaVie respectively.}
\end{figure}

\label{appendix:demo}
\begin{figure}[h]
    \centering
    \includegraphics[width=0.8\textwidth]{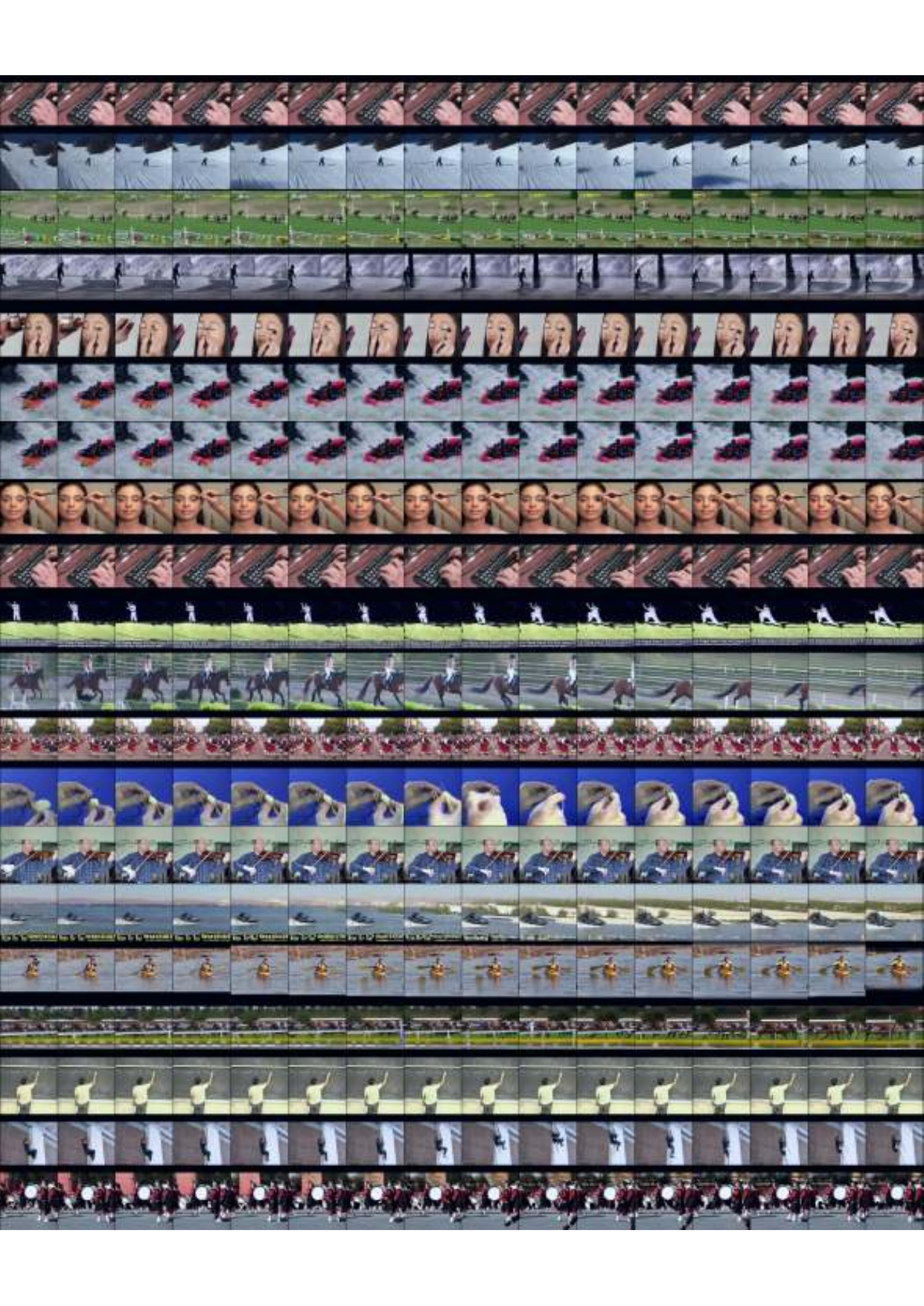}
\caption{Zero-shot generation results of fine-tuned FrameBridge (with SAF) on UCF-101.}
\end{figure}

\begin{figure}[h]
    \centering
    \includegraphics[width=0.8\textwidth]{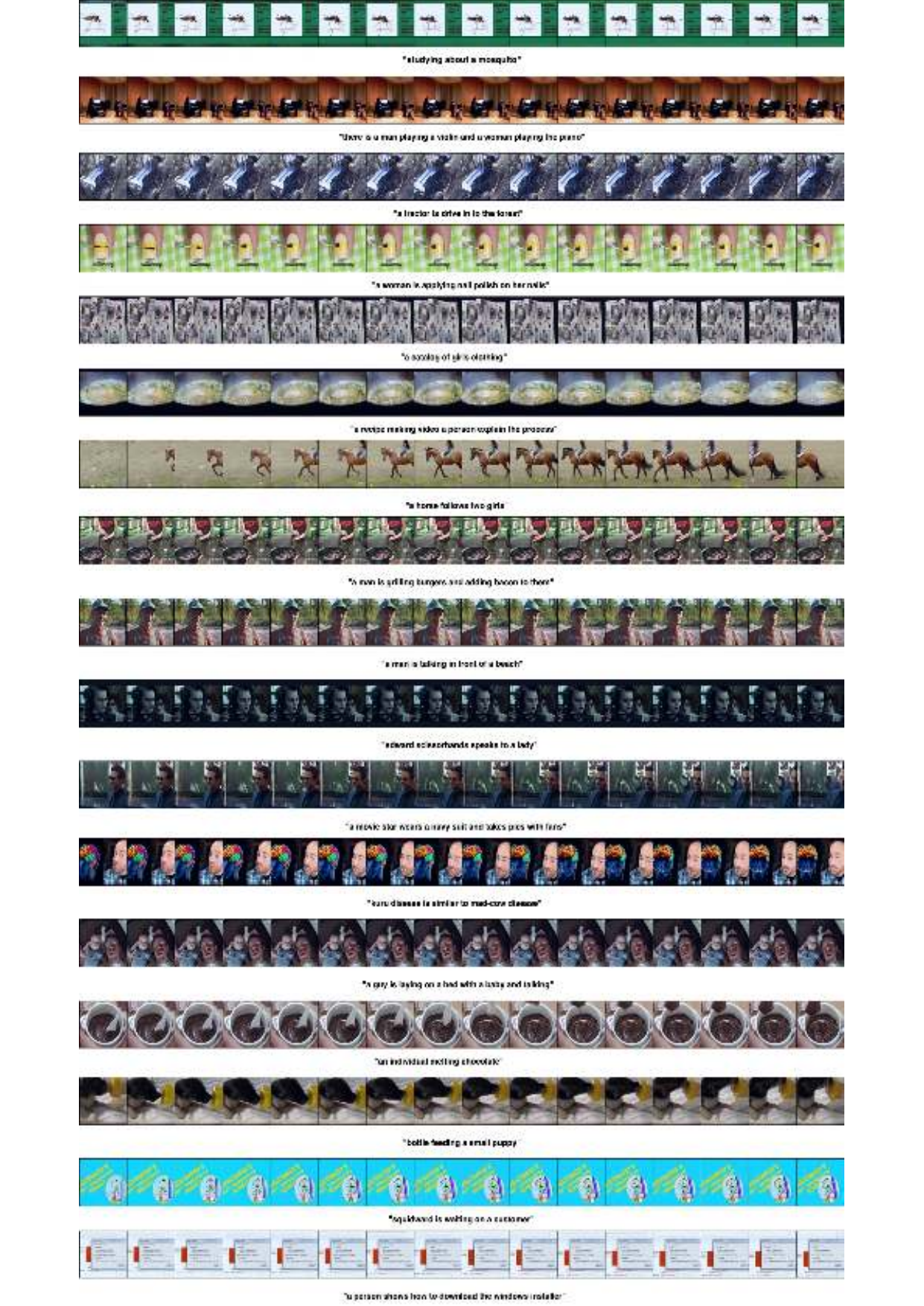}
\caption{Zero-shot generation results of fine-tuned FrameBridge (with SAF) on MSR-VTT.}
\end{figure}

\begin{figure}[h]
    \centering
    \includegraphics[width=\textwidth]{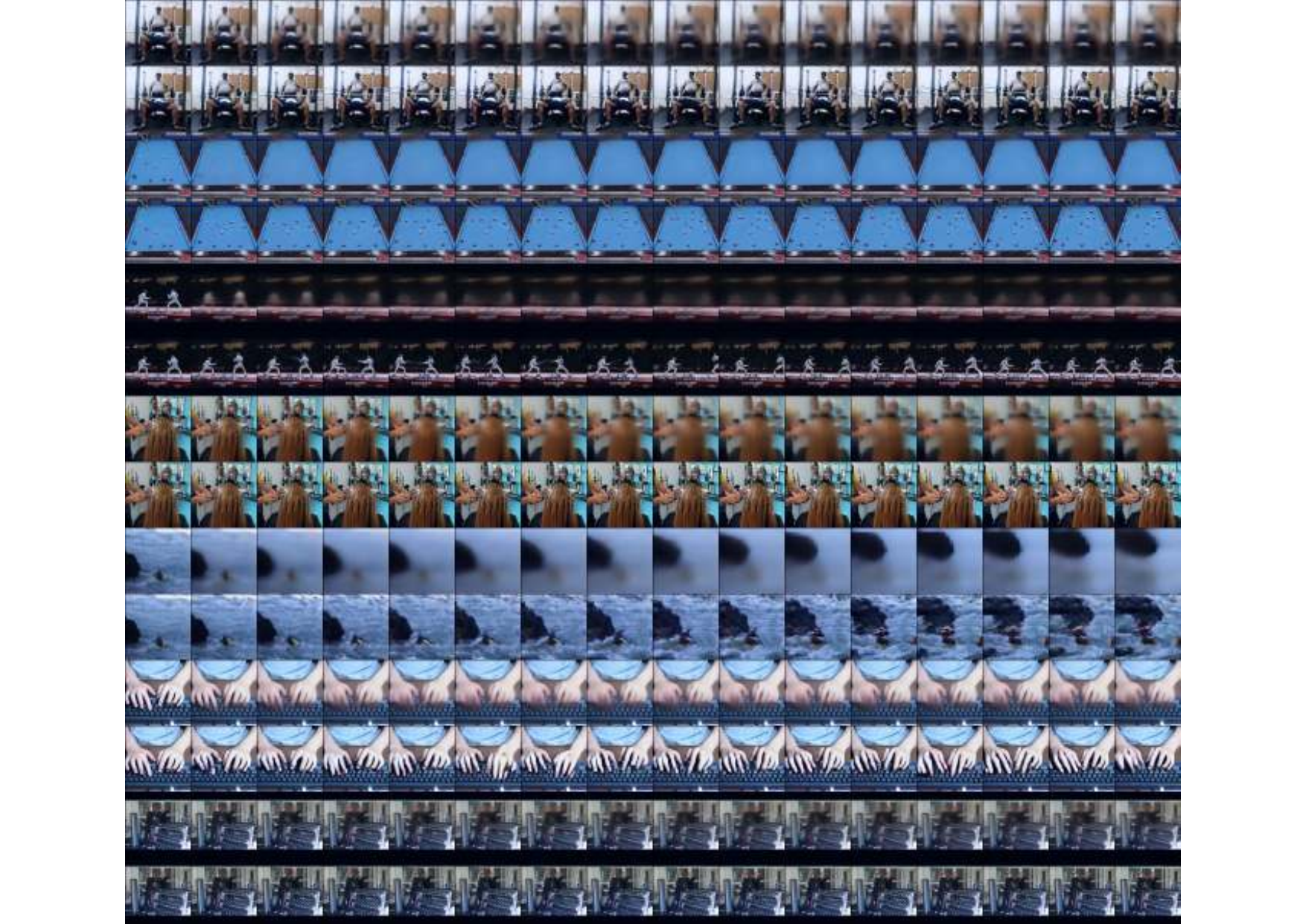}
\caption{Non-zero-shot generation results of FrameBridge-NP on UCF-101. We use two lines to present a neural prior and the corresponding generated video.}
\end{figure}

\begin{figure}[h]
    \centering
    \includegraphics[width=\textwidth]{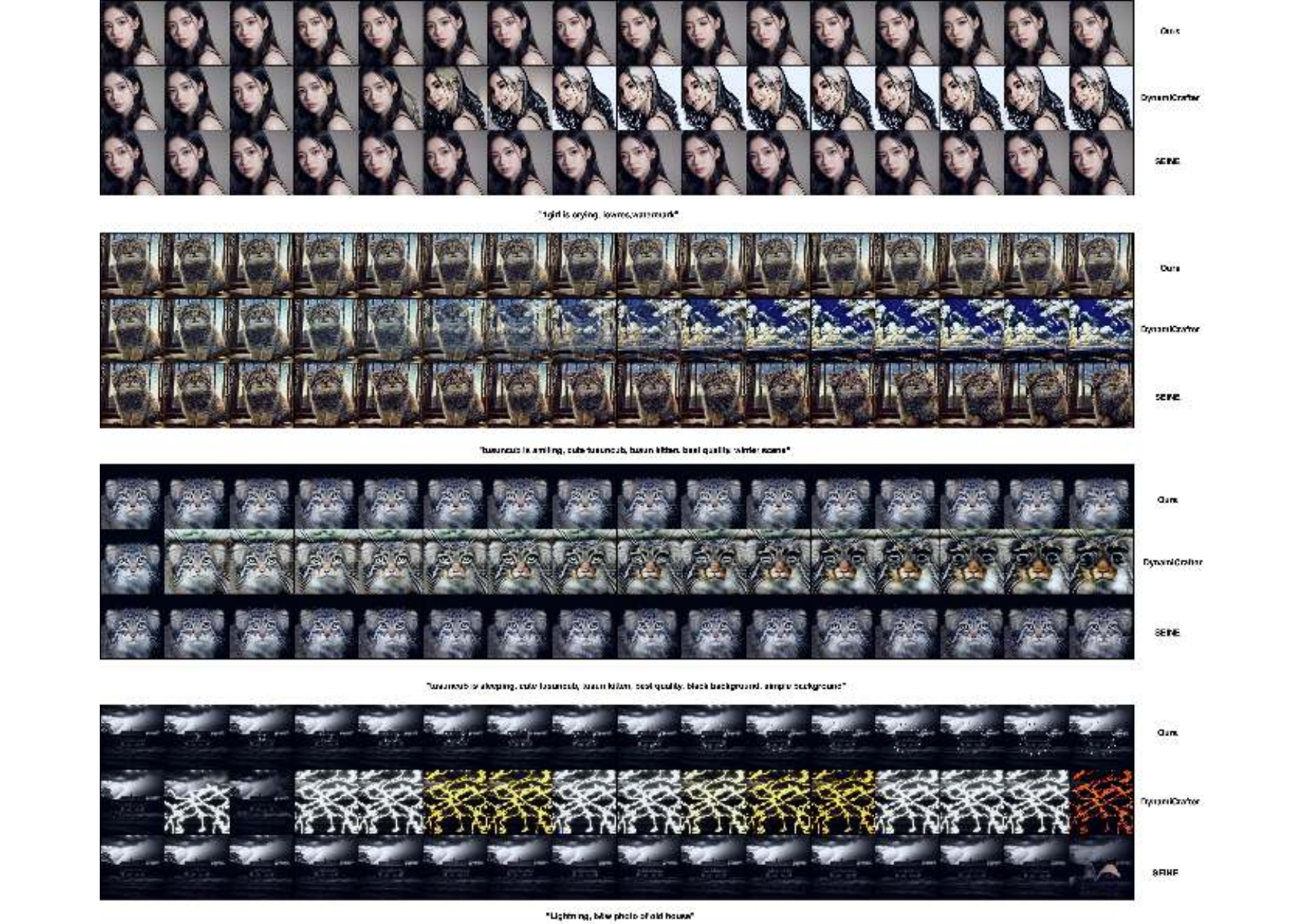}
\caption{Comparisons between fine-tuned FrameBridge and other diffusion-based I2V models.}
\end{figure}

\end{document}